\documentclass{cmslatex}
\usepackage[paperwidth=7in, paperheight=10in, margin=.875in]{geometry}
\usepackage[backref,colorlinks,linkcolor=red,anchorcolor=green,citecolor=blue]{hyperref}
\usepackage{amsfonts,amssymb}
\usepackage{amsmath}
\usepackage{graphicx}
\usepackage{cite}
\usepackage{enumerate}

\usepackage{algorithm}
\usepackage{algorithmic}

\newcommand{\MYCOMMENT}[1]{\textcolor{gray}{// #1}}

\usepackage{subcaption}

\usepackage{mathtools}
\usepackage{multirow}
\usepackage{xcolor}
\usepackage{microtype}
\usepackage{graphicx}
\usepackage{booktabs}



\sloppy

\thinmuskip = 0.5\thinmuskip \medmuskip = 0.5\medmuskip
\thickmuskip = 0.5\thickmuskip \arraycolsep = 0.3\arraycolsep

\allowdisplaybreaks
\begin{document}

\title{Riemannian Denoising Diffusion Probabilistic Models}


\author{
Zichen Liu\thanks{\textbf{Equal contribution.} Center for Data Science, Peking University, Beijing 100871, P.R. China (zcliu@stu.pku.edu.cn).}
\and Wei Zhang\thanks{\textbf{Equal contribution.} Zuse Institute Berlin, Takustrasse 7, Berlin 14195, Germany (wei.zhang@fu-berlin.de).}
\and Christof Schütte\thanks{Institute of Mathematics, Freie Universität Berlin and Zuse Institute Berlin, Takustrasse 7, Berlin 14195, Germany (schuette@zib.de).}
\and Tiejun Li\thanks{\textbf{Corresponding author.} LMAM and School of Mathematical Sciences, Center for Machine Learning Research and Center for Data Science, Peking University, Beijing 100871, P.R. China (tieli@pku.edu.cn).}
}

\pagestyle{myheadings}
\markboth{\uppercase{Riemannian Denoising Diffusion Probabilistic Models}}{\uppercase{Zichen Liu, Wei Zhang, Christof Schütte, and Tiejun Li}}
\maketitle

\begin{abstract}
We propose Riemannian Denoising Diffusion Probabilistic Models (RDDPMs) for learning distributions on submanifolds of Euclidean space that are level sets of functions, including most of the manifolds relevant to applications. Existing methods for generative modeling on manifolds rely on substantial geometric information such as geodesic curves or eigenfunctions of the Laplace-Beltrami operator and, as a result, they are limited to manifolds where such information is available. In contrast, our method, built on a projection scheme, can be applied to more general manifolds, as it only requires being able to evaluate the value and the first order derivatives of the function that defines the submanifold. We provide a theoretical analysis of our method in the continuous-time limit, which elucidates the connection between our RDDPMs and score-based generative models on manifolds. The capability of our method is demonstrated on distributions over complex manifolds implicitly represented as level sets, with applications in statistical mechanics and molecular dynamics. Our code is available at: \href{https://github.com/ZichenLiu1999/RiemannianDDPM}{https://github.com/ZichenLiu1999/RiemannianDDPM}.
\end{abstract}

\begin{keywords}
generative modeling; diffusion probabilistic model; submanifold; projection scheme.
\end{keywords}

\begin{AMS} 58J65; 60J05; 60J60.
\end{AMS}

\section{Introduction}\label{sec-intro}

Generative models have achieved remarkable success in learning data distributions across various fields. Among them, diffusion models stand out for their superior ability to generate high-quality samples that resemble the data distributions. Two prominent frameworks are Denoising Diffusion Probabilistic Models (DDPMs; \cite{NEURIPS2020_ddpm}), which minimize a variational bound in variational inference, and Score-based Generative Models (SGMs; \cite{song2019generative,song2021scorebased}), which learn the score function~\cite{hyvarinen2005estimation}. Both frameworks have demonstrated significant success in learning data distributions in Euclidean spaces.

In many scientific domains, data distributions are constrained to Riemannian manifolds rather than Euclidean spaces. For example, spheres are used in geographical sciences \cite{mathieu2020riemannian}, while $\mathrm{SE}(3)$ and $\mathrm{SO}(3)$ are considered in studying  protein structures \cite{watson2022broadly} and robotic movements \cite{simeonov2022neural}. Other manifolds include $\mathrm{SU}(3)$ in lattice quantum chromodynamics \cite{NEURIPS2023_ScalingRDM}, triangular meshes in 3D computer graphics \cite{hoppe1992surface}, and the Poincaré disk in cell development research \cite{klimovskaia2020poincare}. These applications highlight the need for developing generative models that can handle distributions on manifolds.

Several recent works have extended diffusion-based models to Riemannian manifolds. The Riemannian Score-based Generative Model (RSGM; \cite{de2022riemannian}) extends SGM to Riemannian manifolds by incorporating the heat kernel into the denoising score-matching objective. Since heat kernels on manifolds are generally intractable, RSGM approximates them using eigenfunction expansion or Varadhan’s approximation. Furthermore, RSGM leverages the exponential map to enable trajectory sampling on manifolds. The Riemannian Diffusion Model \cite{huang2022riemannian} adopts a variational diffusion model framework on Riemannian manifolds. It considers submanifolds embedded in an Euclidean space and utilizes a variational upper bound on the negative log-likelihood as loss function. Additionally, the Trivialized Diffusion Model \cite{zhu2025trivialized} is able to solve generative tasks on Lie groups by utilizing group property of the underlying manifold. Beyond diffusion-based models, flow-based generative models \cite{lou2020neural,mathieu2020riemannian,rozen2021moser,BenHamu2022MatchingNF,chen2024flow} extend continuous normalizing flows to manifolds. In particular, the method proposed in \cite{chen2024flow} accommodates manifolds with relatively general geometries, provided that their geodesic curves can be efficiently computed.

Despite these advancements, existing methods heavily rely on geometric information of manifolds, such as geodesics, exponential maps, heat kernels, or metrics. This restricts their applicability to manifolds where such information is readily available. In view of these limitations, we aim to develop algorithms that bypass these geometric dependencies, making them applicable to more general manifolds.

In this work, we introduce Riemannian Denoising Diffusion Probabilistic Models (RDDPMs), an extension of DDPMs to Riemannian submanifolds. A key ingredient is the projection scheme used in Monte Carlo methods for sampling under constraints, which allows us to develop Markov chains on manifolds with explicit transition densities. To our best knowledge, no successful extension of DDPMs to manifolds has been proposed prior to this study. The main advantages of our method over existing methods are summarized below.
\begin{itemize}
\item Our method is developed for submanifolds that are level sets of smooth functions in Euclidean space. This general setting includes most of the often studied manifolds such as spheres and matrix groups. More importantly, it fits well with applications where constraints are involved, e.g. applications in statistical mechanics and molecular dynamics.

\item Our method requires neither geodesic curves nor heat kernel, and it only relies on the computation of the value and the first-order derivatives of the function that defines the submanifold. This makes our approach applicable to more general manifolds.

\item We present a theoretical analysis for the loss function of our method in the continuous-time limit, elucidating its connection to the existing methods~\cite{de2022riemannian}. This analysis also shows the equivalence between loss functions derived from variational bound in variational inference and from learning score function. 
\end{itemize}

We successfully apply our method to distributions on manifolds studied in prior works, as well as new manifolds implicitly defined as level sets, such as the configuration space of alanine dipeptide with a fixed dihedral angle and the conserved Hamiltonian surface in phase space, both of which are challenging for existing methods due to their geometric complexity.

\section{Background}

\subsection{Riemannian submanifolds.}
We consider the zero level set $\mathcal{M}=\{x \in \mathbb{R}^n |\xi(x)=0\}$ of a smooth function $\xi:~\mathbb{R}^n\rightarrow\mathbb{R}^{n-d}$, where $1 \le d < n$.
We assume that $\mathcal{M}$ is non-empty and the matrix $\nabla\xi(x)\in \mathbb{R}^{n\times(n-d)}$, i.e.\ the Jacobian of $\xi$, has full rank at each $x\in \mathcal{M}$. 
Under this assumption, the regular value theorem~\cite[Corollary 5.9]{banyaga2004lectures} implies that $\mathcal{M}$ is a $d$-dimensional submanifold of $\mathbb{R}^n$. We further assume that $\mathcal{M}$ is a smooth compact connected manifold without boundary. 
The Riemannian metric on $\mathcal{M}$ is endowed from the standard Euclidean distance on $\mathbb{R}^n$.
For $x\in \mathcal{M}$, we denote by $T_x\mathcal{M}$ the tangent space of
$\mathcal{M}$ at~$x$. The orthogonal
projection matrix $P(x) \in \mathbb{R}^{n\times n}$ mapping
$T_x\mathbb{R}^n=\mathbb{R}^n$ to $T_x\mathcal{M}$ is given by $P(x) = I_n -
\nabla\xi(x) \big(\nabla\xi(x)^\top\nabla\xi(x)\big)^{-1} \nabla\xi(x)^\top$. Let $U_x\in\mathbb{R}^{n\times{d}}$ be a matrix whose column vectors form an
orthonormal basis of $T_{x}\mathcal{M}$ such that $U_x^\top U_x=I_d$. It is straightforward to verify that $P(x)=U_xU_x^\top$. 
The volume element over $\mathcal{M}$ is denoted by $\sigma_{\mathcal{M}}$.  All probability densities that appear in this paper refer to relative probability densities with respect to either $\sigma_{\mathcal{M}}$ or the product of $\sigma_{\mathcal{M}}$ over product spaces. 
For notational simplicity, we also use the shorthand $\int p(x^{(1:N)}) dx^{(1:N)} := \int_{\mathcal{M}}\dots \int_{\mathcal{M}} p(x^{(1)},\dots, x^{(N)}) d\sigma_{\mathcal{M}}(x^{(1)})\cdots d\sigma_{\mathcal{M}}(x^{(N)})$.

\subsection{Denoising diffusion probabilistic models.}
Denoising diffusion probabilistic models DDPMs \cite{pmlr-v37-sohl-dickstein15,NEURIPS2020_ddpm} employ a forward Markov chain to perturb data into noise and a reverse Markov chain to incrementally recover data from noise. The models are trained to minimize a variational bound on the negative log-likelihood. 
In the following, we formulate the general steps of DDPMs in the manifold setting. 

Assume that the data distribution is $q_0(x) d\sigma_{\mathcal{M}}(x)$. DDPMs are a class of generative models built on Markov chains. Specifically, states $x^{(1)}, \dots, x^{(N)}\in \mathcal{M}$ are generated by evolving the data $x^{(0)}$ according to a Markov chain on $\mathcal{M}$, which is called the forward process. The joint probability
density of $x^{(1)}, \dots, x^{(N)}$ given $x^{(0)}$ is 
\begin{equation}
  q(x^{(1:N)}\,|\,x^{(0)}) = \prod_{k=0}^{N-1} q(x^{(k+1)}\,|\,x^{(k)})\,,
    \label{joint-forward}
\end{equation}
where $q(x^{(k+1)}\,|\,x^{(k)})$ is the transition density of the forward process.
The generative process, also called the reverse process, is a Markov chain on $\mathcal{M}$ that is learnt to reproduce the data by reversing the forward process. Its joint probability density is 
\begin{equation}
  p_\theta(x^{(0:N)}) = p(x^{(N)}) \prod_{k=0}^{N-1} p_\theta(x^{(k)}\,|\,x^{(k+1)})\,, 
  \label{joint-backward}
\end{equation}
where $p(x^{(N)})$ is a (fixed) prior density, $p_\theta(x^{(k)}\,|\,x^{(k+1)})$ is the transition density of the reverse process, and $\theta$ is the parameter to be learnt. The probability density of
$x^{(0)}$ generated by the reverse process is therefore $p_\theta(x^{(0)})=
\int p_\theta(x^{(0:N)})\, dx^{(1:N)}$.
The learning objective is based on the standard variational bound on the negative log-likelihood. Specifically, 
using \eqref{joint-forward} and \eqref{joint-backward}, and applying Jensen's inequality, we can derive 
\begin{align}
\mathbb{E}_{q_0}\big(-\log p_\theta(x^{(0)})\big) 
= & \mathbb{E}_{q_0}\Big(-\log \int p_\theta(x^{(0:N)})\, dx^{(1:N)}\Big) \notag \\
=& \mathbb{E}_{q_0}\Big(-\log \int
\frac{p_\theta(x^{(0:N)})}{q(x^{(1:N)}\,|\,x^{(0)})}
q(x^{(1:N)}\,|\,x^{(0)})\, dx^{(1:N)}\Big) \notag \\
\le & \mathbb{E}_{\mathbb{Q}^{(N)}}\Big( -\log \frac{p_\theta(x^{(0:N)})}{q(x^{(1:N)}\,|\,x^{(0)})}\Big)\notag \\
=& \mathbb{E}_{\mathbb{Q}^{(N)}}\Big( -\log p(x^{(N)}) - \sum_{k=0}^{N-1}
\log\frac{p_\theta(x^{(k)}\,|\,x^{(k+1)})}{q(x^{(k+1)}\,|\,x^{(k)})}\Big)\,,  \label{variational-bound}
\end{align}
where $\mathbb{E}_{q_0}, \mathbb{E}_{\mathbb{Q}^{(N)}}$ denote the expectation with respect to the data distribution on $\mathcal{M}$, and the expectation with respect to the joint density $q(x^{(0:N)})$ under the forward process, respectively. In order to derive an explicit training objective, we have to construct Markov chains on $\mathcal{M}$ with explicit transition densities. We discuss how this can be achieved in the next section.

We conclude this section by reformulating the variational bound (\ref{variational-bound}) 
using relative entropy (see~\cite{NEURIPS2021_likelihood_training_sbdm} for a similar formulation of score-based diffusion models). Recall that the relative entropy, or Kullback-Leibler (KL) divergence, from a probability density $Q_2$ to another probability density $Q_1$ in the same measure space, where $Q_1$ is absolutely continuous with respect to $Q_2$, is defined as
$H(Q_1\,|\,Q_2) := \mathbb{E}_{Q_1}\Big(\log \frac{Q_1}{Q_2}\Big)$.
For simplicity, we also use the same notation for two probability measures.
Adding the term $\mathbb{E}_{q_0}(\log q_0)$ to both sides of the inequality (\ref{variational-bound}), we see that it is equivalent to the so-called data processing inequality
\begin{equation}
  H(q_0\,|\,p_\theta) \le
  H(\overleftarrow{\mathbb{Q}}^{(N)}\,|\,\mathbb{P}^{(N)}_\theta) \,,
  \label{relative-entropy-formulation-of-variational-bound}
\end{equation}
where the upper bound is the relative entropy from the path measure
 $\mathbb{P}^{(N)}_\theta$ of the reverse process to the path measure $\overleftarrow{\mathbb{Q}}^{(N)}$
of the forward process (the arrow in the notation indicates that paths of the forward process are viewed backwardly).
Therefore, learning DDPMs using the variational bound (\ref{variational-bound}) can be viewed as approximating probability measures in path space by the cross-entropy method~\cite{zwhws2014}.

\section{Method}\label{sec-method}

\subsection{Projection scheme.}\label{subsec-projection-scheme}
We recall a projection scheme from Monte Carlo sampling methods on manifolds~\cite{projection_diffusion,goodman-submanifold,hmc-submanifold-tony,LelievreStoltzZhang2022}, and we show that it allows us to construct Markov chains on $\mathcal{M}$ with tractable transition densities. 

Given $x\in\mathcal{M}$ and a tangent vector $v\in T_x\mathcal{M}$ that is drawn from the standard Gaussian distribution on $T_x\mathcal{M}$, we compute the intermediate state $x'=x+ \sigma^2 b(x)+\sigma v\in \mathbb{R}^n$, where $\sigma>0$ is a positive constant and $b: \mathbb{R}^n \rightarrow \mathbb{R}^n$ is a smooth function. In general, $x'$ does not belong to $\mathcal{M}$. We consider the projection $y\in \mathcal{M}$ of $x'$ onto $\mathcal{M}$ along an orthogonal direction in the space spanned by column vectors of $\nabla\xi(x)$. Precisely, the projected state $y$ is found by (numerically) solving the constraint equation for $c\in \mathbb{R}^{n-d}$ 
\begin{equation}
  y = x + \sigma^2 b(x) + \sigma v + \nabla\xi(x)c,  ~~ \mbox{s.t.}~~ \xi(y) =  0\in \mathbb{R}^{n-d} \,.
  \label{projection}
\end{equation}
This projection scheme can be seen as a discretization of a certain stochastic process on the manifold $\mathcal{M}$, where $\sigma^2$ corresponds to the step size~\cite{projection_diffusion}. The choice of $b$ will affect the final invariant distribution and the convergence rate to equilibrium of the resulting Markov chain (see Section~\ref{subsec-algo-details} for further discussion). There are $n-d$ constraints in \eqref{projection} with the same number of unknown variables. In particular, when $\xi$ is scalar-valued, i.e., $n-d=1$, solving \eqref{projection} amounts to finding a root of a (nonlinear) scalar function.

Algorithm~\ref{Newton_solver} shows the details of solving the linear equations. Let $k_{\mathrm{iter}}$ denote the number of Newton iterations, and $C_{\xi}$ be the computational cost of evaluating $\nabla \xi$. The complexity of solving the linear equations in line 4 of Algorithm~\ref{Newton_solver} is $\mathcal{O}((n-d)^3)$. Thus, the total complexity of Newton’s method is $\mathcal{O}(k_{\mathrm{iter}}(C_{\xi} +(n-d)^3))$. When the step size $\sigma$ is sufficiently small, Newton's method typically converges in a few iterations. Therefore, the computational cost of Newton's method is primarily determined by the cost of computing $\nabla \xi$ (i.e., $C_{\xi}$) and the codimension $n-d$ of the manifold.

While for some vectors $v$ multiple solutions to \eqref{projection} may exist in theory, the projected state, and hence the resulting Markov chain built on a numerical solver, is uniquely defined as long as the numerical solver finds one solution in a deterministic way. For example, this is the case when Newton's method is adopted to solve \eqref{projection} with fixed initial condition $c=0$. When no solution can be found for some $v$, we can either resample the tangent vector $v$ or resample a new path (see Section~\ref{subsec-algo-details} for further discussion). Let $\mathcal{F}_{x}^{(\sigma)}$ be the set of $v$ for which a solution can be found and denote by $\epsilon_{x}^{(\sigma)}=\mathbb{P}(v\notin \mathcal{F}_{x}^{(\sigma)})$, i.e.\ the probability that no solution can be found. We denote $\mathcal{M}_{x}^{(\sigma)}$ the set of all states in $\mathcal{M}$ that can be reached from $x$ by solving \eqref{projection} with certain~$v\in \mathcal{F}_{x}^{(\sigma)}$. Notice that $\epsilon_{x}^{(\sigma)}=0$ when $\sigma=0$, because in this case $c=0$ is a solution to \eqref{projection} for any $v$. Therefore, it is expected that $\epsilon_{x}^{(\sigma)}\rightarrow 0$ as $\sigma$ decreases to zero. That is, when $\sigma$ is small, a solution to \eqref{projection} can be found with a probability that is close to one.

To derive the transition density of jumping to $y$ from $x$, we notice that, by applying the orthogonal projection matrix $P(x)$ to both sides of \eqref{projection} and using the fact that $P(x)\nabla\xi(x)=0$, we have the relation $\sigma v=P(x)(y-x-\sigma^2 b(x))$. This indicates that, given a state $x\in\mathcal{M}$ and $y\in \mathcal{M}_{x}^{(\sigma)}$, there is a unique tangent vector $v\in \mathcal{F}_{x}^{(\sigma)}\subseteq T_x\mathcal{M}$ that leads to $y$ by solving \eqref{projection}. In other words, the mapping from $v\in \mathcal{F}_{x}^{(\sigma)}$ to $y\in \mathcal{M}_{x}^{(\sigma)}$ is a bijection. Moreover, its inverse is explicitly given by $G_{x}^{(\sigma)}: \mathcal{M}_{x}^{(\sigma)}\rightarrow \mathcal{F}_{x}^{(\sigma)} \subseteq T_{x}\mathcal{M}$, where
\begin{equation}\label{general_g-x}
  G_{x}^{(\sigma)}(y;b) = \frac{1}{\sigma} P(x)(y-x - \sigma^2 b(x))\,.
\end{equation}
To simplify the notation, we also write $G_{x}^{(\sigma)}(y)$ when omitting the dependence on $b$ does not cause ambiguity. Recall that $U_x, U_y\in \mathbb{R}^{n\times d}$ denote the matrices whose columns form an orthonormal basis of $T_x\mathcal{M}$ and $T_y\mathcal{M}$, respectively. Using \eqref{general_g-x}, we can derive 
\begin{equation}
\det (D G_{x}^{(\sigma)}(y)) = \sigma^{-d} \det (U_x^\top U_y)\,,
\end{equation}
where the left-hand side denotes the determinant of the Jacobian $D G_{x}^{(\sigma)}(y): T_{y}\mathcal{M}\rightarrow T_vT_{x}\mathcal{M}\cong \mathbb{R}^d$ of the map $G_{x}^{(\sigma)}$ at $y$. See the texts above (4.5) in Section 4 of \cite{LelievreStoltzZhang2022} for detailed discussions. Since $v$ is a Gaussian variable confined in $\mathcal{F}_{x}^{(\sigma)}$ (with a normalizing constant rescaled by $(1-\epsilon_{x}^{(\sigma)})^{-1}$), applying the change of variables formula for probability densities, we obtain the probability density of $y$ conditioned on $x$:
 \begin{align}
   q(y\,|\,x) =& (2\pi)^{-\frac{d}{2}} (1-\epsilon^{(\sigma)}_{x})^{-1}\mathrm{e}^{-\frac{1}{2} |G_{x}^{(\sigma)}(y)|^2} |\det D G_{x}^{(\sigma)}(y)| \notag \\
   =& (2\pi\sigma^2)^{-\frac{d}{2}} (1-\epsilon^{(\sigma)}_{x})^{-1} \mathrm{e}^{-\frac{1}{2} |G_{x}^{(\sigma)}(y)|^2}
  |\det (U_x^\top U_y)|\,, \quad y\in \mathcal{M}_{x}^{(\sigma)}\,. \label{transition-density-single-step}
\end{align}
For $y \in \mathcal{M} \setminus \mathcal{M}_{x}^{(\sigma)}$, the probability density is zero, i.e., $q(y\,|\,x) = 0$.

\subsection{Forward process.}\label{subsec-forward-process}
We construct the forward process in our model as a Markov chain on $\mathcal{M}$ whose transitions are defined by the projection scheme in \eqref{projection}.
Specifically, given the current state $x^{(k)}\in \mathcal{M}$ at step $k$, where $k=0,1,\dots, N-1$, the next state $x^{(k+1)}\in \mathcal{M}$ is determined by solving the constraint equation (for $c\in \mathbb{R}^{n-d}$):
\begin{equation} \label{projection-kstep-forward}
x^{(k+1)} = x^{(k)} + \sigma_k^2 b(x^{(k)}) + \sigma_k v^{(k)} + \nabla\xi(x^{(k)})c, ~~\mbox{s.t.}~~ \xi(x^{(k+1)}) =  0\in\mathbb{R}^{n-d}\,,
\end{equation}
where $\sigma_k>0$ and $v^{(k)}\in T_{x^{(k)}}\mathcal{M}$ is a standard Gaussian variable in $T_{x^{(k)}}\mathcal{M}$. According to \eqref{joint-forward} and \eqref{transition-density-single-step}, we obtain the transition probability density of the forward process as 
\begin{equation}\label{trans-forward-projection}
q(x^{(k+1)}\,|\,x^{(k)}) = (2\pi\sigma_k^2)^{-\frac{d}{2}}|\det (U_{x^{(k)}}^\top U_{x^{(k+1)}})| \big(1-\epsilon_{x^{(k)}}^{(\sigma_k)}\big)^{-1} \exp\left(-\frac{1}{2}\left|G_{x^{(k)}}^{(\sigma_k)}(x^{(k+1)};b)\right|^2\right)\,,
\end{equation}
where
the function $G_{x^{(k)}}^{(\sigma_k)}(x^{(k+1)};b)$ is defined  in \eqref{general_g-x}.

\subsection{Reverse process.}
The reverse process in our model is a Markov chain on $\mathcal{M}$
whose transitions (from $x^{(k+1)}$ to $x^{(k)}$) are defined by the constraint equation
\begin{align}
& x^{(k)} = x^{(k+1)} - \beta_{k+1}^2 b(x^{(k+1)}) + \beta_{k+1}^2 s^{(k+1),\theta}(x^{(k+1)}) + \beta_{k+1} \bar{v}^{(k+1)} + \nabla\xi(x^{(k+1)})c, \notag \\
\mbox{s.t.}~~& \xi(x^{(k)}) = 0 \,, \label{projection-kstep-backward}
\end{align}
for $k=N-1, N-2, \dots, 0$, where $\beta_{k+1}>0$, $\bar{v}^{(k+1)}$ is a standard Gaussian variable in $T_{x^{(k+1)}}\mathcal{M}$, and $s^{(k+1),\theta}(x^{(k+1)})\in\mathbb{R}^n$ depends on the learning parameter $\theta$. Combining \eqref{joint-backward} and \eqref{transition-density-single-step}, we obtain the transition density of the reverse process as
\begin{align}
    &p_\theta(x^{(k)}\,|\,x^{(k+1)}) \notag \\
    =& (2\pi\beta_{k+1}^2)^{-\frac{d}{2}} (1-\epsilon_{x^{(k+1)},\theta}^{(\beta_{k+1})} )^{-1} |\det (U_{x^{(k+1)}}^\top U_{x^{(k)}})|  \exp\left(-\frac{1}{2}\left|G_{x^{(k+1)}}^{(\beta_{k+1})}(x^{(k)}; s^{(k+1),\theta}-b)\right|^2\right)\,. \label{trans-backward-projection}
\end{align}
In the above, $\epsilon_{x^{(k+1)},\theta}^{(\beta_{k+1})}$ denotes the probability of having $\bar{v}^{(k+1)}$ with which no solution to (\ref{projection-kstep-backward}) can be found and $G_{x^{(k+1)}}^{(\beta_{k+1})}(x^{(k)}; s^{(k+1),\theta}-b)$ is defined in \eqref{general_g-x}.

\subsection{Training objective.}
\label{subsec-training-objective}
The training objective follows directly from the variational bound~(\ref{variational-bound}) on
the negative log-likelihood, as well as the explicit expressions of transition densities. Concretely, substituting \eqref{trans-forward-projection} and \eqref{trans-backward-projection} into the last line of (\ref{variational-bound}), we get
\begin{equation}
  \mathbb{E}_{q_0}[-\log p_\theta(x^{(0)})]   \le  \mbox{Loss}^{(N)}(\theta) + C^{(N)}  \,,
  \label{variational-bound-involving-loss}
\end{equation}
where 

\begin{equation}
\mbox{Loss}^{(N)}(\theta)=\frac{1}{2}\mathbb{E}_{\mathbb{Q}^{(N)}}\sum_{k=0}^{N-1} \left|G_{x^{(k+1)}}^{(\beta_{k+1})}(x^{(k)}; s^{(k+1),\theta}-b)\right|^2
\label{training-loss}
\end{equation}
is our objective for training the parameter $\theta$ in the reverse process  
(recall that $\mathbb{E}_{\mathbb{Q}^{(N)}}$ denotes the expectation with respect to the forward process), the constant
\begin{align}
C^{(N)} = &-\mathbb{E}_{\mathbb{Q}^{(N)}}\bigg[\frac{1}{2}\sum_{k=0}^{N-1}\left|G_{x^{(k)}}^{(\sigma_k)}(x^{(k+1)};b)\right|^2+\log p(x^{(N)})\bigg] \notag \\
& +d \sum_{k=0}^{N-1} \log \frac{\beta_{k+1}}{\sigma_k} - \mathbb{E}_{\mathbb{Q}^{(N)}} \sum_{k=0}^{N-1}\log \big(1-\epsilon_{x^{(k)}}^{(\sigma_k)}\big) \label{constant-c}
\end{align}
is independent of $\theta$, and we have used the inequality $\log\left(1-\epsilon_{x^{(k+1)},\theta}^{(\beta_{k+1})}\right)\le~0$ in deriving \eqref{variational-bound-involving-loss}.

\subsection{Algorithmic details.}\label{subsec-algo-details}
The algorithms for sampling the trajectories of the forward and reverse processes are summarized in Algorithms~\ref{algo_forward} and~\ref{algo_generation}, respectively, both of which involve solving constraint equations, as detailed in Algorithm~\ref{Newton_solver}. The training algorithm is summarized in Algorithm~\ref{algo_training}. In the following, we discuss several algorithmic details of our method.

\subsubsection{Choice of the Markov chain.}
The total number of steps $N$ should be large enough so that the forward Markov chain can approximately reach equilibrium starting from the data distribution. To simplify algorithm implementation, we may set $\beta_{k+1}=\sigma_k$, a choice that is also supported by our theoretical derivation in the continuous-time limit (see Theorem \ref{thm-continuous-limit}). While larger $\sigma_k$, $\beta_{k+1}$ allow the Markov chains to make larger jumps, their sizes should be chosen properly (depending on the manifold) so that the solution to the constraint equations \eqref{projection-kstep-forward} and \eqref{projection-kstep-backward} can be found with high probability.

For relatively simple manifolds, we can simply choose $b=0$ in the forward Markov chain~\eqref{projection-kstep-forward}. As we will discuss in Section~\ref{sec-continuous-limit}, in the continuous-time limit, the Markov chain with $b=0$ converges to the Brownian motion on $\mathcal{M}$ up to a rescaling of time (\eqref{sde-manifold} with $b=0$), whose invariant distribution is the uniform distribution on $\mathcal{M}$. Motivated by this fact, we can choose the prior (see line 2 in Algorithm~\ref{algo_generation}) as the uniform distribution on $\mathcal{M}$. When $\mathcal{M}$ is non-compact or when the convergence of Markov chain to equilibrium is slow with $b=0$, we can choose non-zero $b$ such as $b=-\nabla V$, i.e. the (full space) gradient of some function $V:\mathbb{R}^n \rightarrow \mathbb{R}$ in the ambient space. In this case, we choose the prior as the invariant distribution of the forward process, and sampling the prior can be done by simulating a single long trajectory of the forward process.

\subsubsection{Method for solving constraint equations.}
As in Monte Carlo sampling methods on submanifolds, we employ Newton's method to solve the constraint equations~\eqref{projection-kstep-forward} and \eqref{projection-kstep-backward}. This method has (local) quadratic convergence and its implementation is simple. In most cases, a solution with high precision can be found within a few iteration steps (usually less than 5 steps). When no solution is found, one can re-generate the state by sampling a new tangent vector or re-generate the entire trajectory. Our implementation of Newton's method is summarized in Algorithm~\ref{Newton_solver}.

\subsubsection{Generation of trajectory data.}
The optimal parameter $\theta$ is sought by minimizing the objective \eqref{training-loss}, which requires the trajectory data of the forward process. Although sampling trajectories to evaluate the loss function may seem computationally expensive, the computational cost can be alleviated by using a pre-prepared trajectory dataset that is updated during training with a tunable frequency. In our implementation, trajectories are initially sampled in a preparatory step, and the model is trained with mini-batches from this trajectory dataset, which is periodically updated (see line 2 and lines 10--12 in Algorithm~\ref{algo_training}).

\begin{algorithm}[ht]
\caption{Sampling trajectory of forward process}\label{algo_forward}
\begin{algorithmic}[1]
\STATE \textbf{Input: } $x^{(0)}\in\mathcal{M}$, constants $\sigma_{k}$, function $b$, and integer $N$
\FOR {$k = 0$ to $N-1$}
    \STATE sample $z^{(k)}\sim \mathcal{N}(0,I_n)$ and set $v^{(k)}= P(x^{(k)})z^{(k)}$
    \STATE set $x^{(k+\frac{1}{2})}:= x^{(k)} + \sigma_k^2 b(x^{(k)})+ \sigma_k v^{(k)}$
    \STATE $c, \mathrm{flag} = \text{newton\_solver}(x^{(k)},x^{(k+\frac{1}{2})}; \xi)$. \hfill \MYCOMMENT{solve~(\ref{projection-kstep-forward}) by Algorithm~\ref{Newton_solver}}
    \IF {flag == true} 
    \STATE set $x^{(k+1)} := x^{(k+\frac{1}{2})} + \nabla\xi(x^{(k)})c$
    \ELSE
    \STATE discard the trajectory and re-generate \hfill
    \MYCOMMENT{alternatively, go back to line 3}
    \ENDIF
\ENDFOR
\STATE \textbf{Return} $(x^{(0)},x^{(1)},\dots,x^{(N)})$
\end{algorithmic}
\end{algorithm}

\begin{algorithm}[ht]
\caption{Sampling trajectory of reverse process}\label{algo_generation}
\begin{algorithmic}[1]
\STATE \textbf{Input: } trained functions $(s^{(k+1),\theta}(x))_{0\le k\le N-1}$, constants $\beta_{k}$, function $b$, and integer $N$
\STATE draw sample $x^{(N)}$ from the prior distribution $p(x^{(N)})$
\FOR {$k = N-1$ to $0$}
    \STATE sample $\bar{z}^{(k+1)}\sim \mathcal{N}(0,I_n)$ and set $\bar{v}^{(k+1)}= P(x^{(k+1)})\bar{z}^{(k+1)}$
    \STATE set $x^{(k+\frac{1}{2})} = x^{(k+1)} +\beta_{k+1}^2  P(x^{(k+1)}) \left(s^{(k+1),\theta}-b\right)(x^{(k+1)})   +\beta_{k+1} \bar{v}^{(k+1)}$
    \STATE $c, \mathrm{flag}=\text{newton\_solver}(x^{(k+1)}, x^{(k+\frac{1}{2})};\xi)$ \hfill\MYCOMMENT{solve (\ref{projection-kstep-backward}) by Algorithm~\ref{Newton_solver}}
    \IF {flag == true}
    \STATE $x^{(k)} := x^{(k+\frac{1}{2})} + \nabla\xi(x^{(k+1)})c$
    \ELSE
    \STATE discard the trajectory and re-generate \hfill \MYCOMMENT{alternatively, go back to line 4}
    \ENDIF
\ENDFOR
\STATE \textbf{Return} $(x^{(N)},x^{(N-1)},\dots,x^{(0)})$
\end{algorithmic}
\end{algorithm}

\begin{algorithm}[ht]
\caption{newton\_solver($x,x';\xi$) \hfill{\textcolor{gray}{// solve $\xi(x'+\nabla\xi(x)c)=0$ by Newton's method}}}\label{Newton_solver}
\begin{algorithmic}[1]
\STATE \textbf{Input: } $x\in \mathcal{M}$, $x'\in\mathbb{R}^n$, $\xi: \mathbb{R}^n\rightarrow \mathbb{R}^{n-d}$, maximal steps $n_{\mathrm{step}}$, tolerance $\mathrm{tol}>0$
\STATE \textbf{Initialization}: set $c=0\in\mathbb{R}^{n-d}$
\FOR {$k = 0$ to $n_{\mathrm{step}}-1$}
    \STATE Solve linear equation $\big[\nabla \xi\big(x'+\nabla\xi(x)c\big)^\top\nabla\xi(x)\big] u= -\xi\big(x'+\nabla \xi (x)c\big)$ for $u\in\mathbb{R}^{n-d}$
    \STATE $c \leftarrow c+u$
    \IF {$|\xi(x'+\nabla\xi(x) c)|<\mathrm{tol}$}
    \STATE \textbf{Return} c, true
    \ENDIF
\ENDFOR
\STATE \textbf{Return} c, false
\end{algorithmic}
\end{algorithm}

\begin{algorithm}[ht]
\caption{Training procedure}\label{algo_training}
\begin{algorithmic}[1]
\STATE \textbf{Input: } training data $(y^{i})_{1\le i\le M}$, functions $(s^{(k+1),\theta}(x))_{0\le k\le N-1}$, constants $\sigma_{k}, \beta_{k}>0$, function $b: \mathbb{R}^n\rightarrow\mathbb{R}^n$, integer $N$, batch size $B>0$, number of total training epochs $N_{\mathrm{epoch}}$, integer $l_{\mathrm{f}}>0$, learning rate $r>0$.
\STATE generate a path $(x^{(0),i},x^{(1),i},\dots,x^{(N),i})$ using Algorithm \ref{algo_forward}, for each $x^{(0),i}=y^{i}$
\FOR{$l = 1$ to $N_{\mathrm{epoch}}$}
\FOR{$j = 1$ to $\lfloor M/B \rfloor $}
\STATE sample a min-batch $\mathcal{I}=(i_1,i_2,\dots,i_{B})$ from the set of indices $\{1,2,\dots, M\}$
\STATE calculate loss: $\ell(\theta)=\frac{1}{2|\mathcal{I}|}\sum\limits_{i\in\mathcal{I}}\sum\limits_{k=0}^{N-1} \left|G_{x^{(k+1),i}}^{(\beta_{k+1})}(x^{(k),i}; s^{(k+1),\theta}-b)\right|^2$
\STATE $\theta=\operatorname{optimizer\_update}(\theta, \ell(\theta), r)$ 
\ENDFOR
\STATE \MYCOMMENT{update trajectories every $l_{\mathrm{f}}$ epochs}
\IF{$l\, \%\, l_{\mathrm{f}} == 0$} 
\STATE re-generate paths $(x^{(0),i},x^{(1),i},\dots,x^{(N),i})$ using Algorithm \ref{algo_forward}, for each $x^{(0),i}=y^{i}$
\ENDIF
\ENDFOR
\STATE \textbf{Return} $\theta$
\end{algorithmic}
\end{algorithm}

\section{Theoretical results}
\label{sec-continuous-limit}
In this section, we study the continuous-time limit of our proposed method. Let $T>0$ and $g: [0,T]\rightarrow \mathbb{R}^+$ be a continuous function.  Define $h=\frac{T}{N}$ and consider the case where $\sigma_k = \sqrt{h} g(kh)$, for $k=0,1,\dots,N-1$. It is shown in~\cite{projection_diffusion} that the forward process (\ref{projection-kstep-forward}) converges strongly to the SDE on $\mathcal{M}$
\begin{equation}\label{sde-manifold}
  dX_t =  g^2(t) P(X_t) b(X_t) dt + g(t) dW^\mathcal{M}_t\,,\quad ~t\in[0,T]\,,
\end{equation}
where $W^\mathcal{M}_t$ is a Brownian motion over $\mathcal{M}$. Denote by $p(\cdot,t)$ the probability density of $X_t$ with respect to $\sigma_{\mathcal{M}}$ at time $t\in [0,T]$. We have the following result, which characterizes the loss function in \eqref{training-loss} as $N\rightarrow +\infty$.

\begin{theorem}\label{thm-continuous-limit}
Let $T>0$ and $g: [0,T]\rightarrow \mathbb{R}^+$ be a continuous function. Define $h=\frac{T}{N}$ and $t_k=kh$, for $k=0,1,\dots,N-1$. Assume that $\sigma_k = \beta_{k+1} = \sqrt{h} g(t_k)$. Also assume that there is a $C^1$ function $s_\theta: \mathbb{R}^n\times [0,T]\rightarrow \mathbb{R}^n$ such that $s^{(k+1),\theta}(x) = s_\theta(x,t_{k+1})$ for all $k=0,1,\dots,N-1$ and $x\in\mathcal{M}$. For the loss function defined in \eqref{training-loss}, we have 
\begin{align*}
& \lim_{N\rightarrow +\infty}\left(\mathrm{Loss}^{(N)}(\theta) -\frac{1}{2}\mathbb{E}_{\mathbb{Q}^{(N)}}\sum_{k=0}^{N-1}\big|v^{(k)}\big|^2 \right) \\
= & \mathbb{E}_{\mathbb{Q}}\Big[\frac{1}{2}\int_{0}^T \big| P(X_t)s_{\theta}(X_t, t)-\nabla_{\mathcal{M}} \log p(X_t,t)\big|^2   g^2(t)\,dt \\
& + \int_{0}^T \Big(P(X_t)b(X_t)-\frac{1}{2}\nabla_{\mathcal{M}} \log  p(X_t,t)\Big)  \cdot \nabla_{\mathcal{M}} \log p(X_t,t)\, g^2(t)\, dt \Big]\, ,
\end{align*}
where $\mathbb{E}_{\mathbb{Q}}$ on the right-hand side denotes the expectation with respect to the paths of SDE~(\ref{sde-manifold}) and $\nabla_{\mathcal{M}}$ denotes the gradient operator on $\mathcal{M}$.
\end{theorem}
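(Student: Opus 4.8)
The plan is to avoid a direct term-by-term Taylor expansion of $\mathrm{Loss}^{(N)}$ and instead (i) recognize $\mathrm{Loss}^{(N)}(\theta)-\frac{Nd}{2}$, up to $\theta$-independent boundary terms and vanishing corrections, as a discrete path-space relative entropy, (ii) pass to the continuum with Girsanov's theorem for diffusions on $\mathcal{M}$, and (iii) close the remaining gap with a de Bruijn-type entropy-production identity for the Fokker--Planck equation of \eqref{sde-manifold}. Two elementary discrete identities make this work. First, since $v^{(k)}\in T_{x^{(k)}}\mathcal{M}$ and $P(x^{(k)})\nabla\xi(x^{(k)})=0$, applying $P(x^{(k)})$ to \eqref{projection-kstep-forward} gives $G^{(\sigma_k)}_{x^{(k)}}(x^{(k+1)};b)=v^{(k)}$ \emph{exactly}, so $\frac12\mathbb{E}_{\mathbb{Q}^{(N)}}\sum_k\big|G^{(\sigma_k)}_{x^{(k)}}(x^{(k+1)};b)\big|^2=\frac{Nd}{2}$. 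Second, $|\det(U_{x^{(k+1)}}^\top U_{x^{(k)}})|=|\det(U_{x^{(k)}}^\top U_{x^{(k+1)}})|$ and $\sigma_k=\beta_{k+1}$, so the Gaussian normalizations and Jacobian factors in \eqref{joint-forward-projection}--\eqref{joint-backward-projection} cancel in $\log\big(q(x^{(k+1)}\,|\,x^{(k)})/p_\theta(x^{(k)}\,|\,x^{(k+1)})\big)$.

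\textbf{Step 1.} Unfolding the chain rule for relative entropy with \eqref{joint-forward-projection}, \eqref{joint-backward-projection} and the two identities above, and writing $\pi$ for the prior density $p(x^{(N)})$, one gets
\[
\mathrm{Loss}^{(N)}(\theta)-\frac{Nd}{2}=H\big(\overleftarrow{\mathbb{Q}}^{(N)}\,\big|\,\mathbb{P}^{(N)}_\theta\big)-\mathbb{E}_{q_0}\big(\log q_0\big)+\mathbb{E}_{\mathbb{Q}^{(N)}}\big(\log\pi(x^{(N)})\big)-r_N,
\]
with $r_N=\mathbb{E}_{\mathbb{Q}^{(N)}}\sum_k\big[\log(1-\epsilon^{(\beta_{k+1})}_{x^{(k+1)},\theta})-\log(1-\epsilon^{(\sigma_k)}_{x^{(k)}})\big]$. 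Since every constraint-solving failure probability vanishes when the step size does, $r_N\to0$ provided these probabilities are $o(h)$ uniformly on the compact $\mathcal{M}$ (a quantitative version of the expectation discussed after \eqref{projection}); this is the only place solvability of the constraint equations enters the analysis.

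\textbf{Step 2 and Step 3.} By \citet{projection_diffusion}, the forward chain converges strongly to \eqref{sde-manifold}, so $\overleftarrow{\mathbb{Q}}^{(N)}$ converges to the law $\overleftarrow{\mathbb{Q}}$ of the reversed process $\bar X_s:=X_{T-s}$; by Anderson's time-reversal formula on $\mathcal{M}$, $\overleftarrow{\mathbb{Q}}$ is the diffusion on $\mathcal{M}$ with drift $g^2(T-s)\big(-P(\cdot)b(\cdot)+\nabla_{\mathcal{M}}\log p(\cdot,T-s)\big)$ and diffusion coefficient $g(T-s)$, started from $p(\cdot,T)$. Applying the same convergence result to \eqref{projection-kstep-backward} (time-dependent drift $s_\theta-b$ in place of $b$), $\mathbb{P}^{(N)}_\theta$ converges to the diffusion $\mathbb{P}_\theta$ with drift $g^2(T-s)P(\cdot)\big(s_\theta(\cdot,T-s)-b(\cdot)\big)$, the same diffusion coefficient, started from $\pi$. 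The two drifts differ by a tangent field, so Girsanov's theorem for diffusions on $\mathcal{M}$ gives
\[
H\big(\overleftarrow{\mathbb{Q}}\,\big|\,\mathbb{P}_\theta\big)=H\big(p(\cdot,T)\,\big|\,\pi\big)+\frac12\,\mathbb{E}_{\mathbb{Q}}\!\int_0^T g^2(t)\,\big|P(X_t)s_\theta(X_t,t)-\nabla_{\mathcal{M}}\log p(X_t,t)\big|^2\,dt,
\]
the $-g^2Pb$ parts of the two drifts cancelling inside the Girsanov integrand; this is exactly the first term of the theorem. Combined with $\mathbb{E}_{\mathbb{Q}^{(N)}}(\log\pi(x^{(N)}))\to\int_{\mathcal{M}}\log\pi\,p(\cdot,T)\,d\sigma_{\mathcal{M}}$, the $-\int_{\mathcal{M}}\log\pi\,p(\cdot,T)$ term inside $H(p(\cdot,T)\,|\,\pi)$ cancels, leaving $\int_{\mathcal{M}}p(\cdot,T)\log p(\cdot,T)\,d\sigma_{\mathcal{M}}$. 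Thus, after Step 1,
\[
\lim_{N\to\infty}\Big(\mathrm{Loss}^{(N)}(\theta)-\tfrac{Nd}{2}\Big)=\tfrac12\,\mathbb{E}_{\mathbb{Q}}\!\int_0^T g^2\big|P s_\theta-\nabla_{\mathcal{M}}\log p\big|^2\,dt+\int_{\mathcal{M}}p(\cdot,T)\log p(\cdot,T)\,d\sigma_{\mathcal{M}}-\int_{\mathcal{M}}q_0\log q_0\,d\sigma_{\mathcal{M}}.
\]
It remains to match the last two terms with the second term of the theorem: $p(\cdot,t)$ solves $\partial_t p=\mathrm{div}_{\mathcal{M}}\big(g^2 p\,(\tfrac12\nabla_{\mathcal{M}}\log p-Pb)\big)$, so $\frac{d}{dt}\int_{\mathcal{M}}p\log p\,d\sigma_{\mathcal{M}}=g^2(t)\int_{\mathcal{M}}p\,(Pb-\tfrac12\nabla_{\mathcal{M}}\log p)\cdot\nabla_{\mathcal{M}}\log p\,d\sigma_{\mathcal{M}}$ after integrating by parts on the closed manifold; integrating over $[0,T]$ produces exactly $\mathbb{E}_{\mathbb{Q}}\int_0^T g^2(Pb-\tfrac12\nabla_{\mathcal{M}}\log p)\cdot\nabla_{\mathcal{M}}\log p\,dt$, which finishes the proof.

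\textbf{Main obstacle.} The delicate step is the continuum limit $H(\overleftarrow{\mathbb{Q}}^{(N)}\,|\,\mathbb{P}^{(N)}_\theta)\to H(\overleftarrow{\mathbb{Q}}\,|\,\mathbb{P}_\theta)$: strong convergence of the path laws only yields $\liminf\ge$ by lower semicontinuity of relative entropy, so a matching upper bound is needed, e.g.\ via uniform moment and $C^1$-in-time regularity estimates for the discrete chains on the compact $\mathcal{M}$. Equivalently, one may prove the limit by hand, Taylor-expanding $\frac12\big|G^{(\beta_{k+1})}_{x^{(k+1)}}(x^{(k)};s^{(k+1),\theta}-b)\big|^2-\frac12|v^{(k)}|^2$ to order $h$, using the constraint $\xi(x^{(k+1)})=0$ and $\mathbb{E}[v^{(k)}v^{(k)\top}\,|\,x^{(k)}]=P(x^{(k)})$ to convert the surviving first-moment terms into intrinsic divergences and then integrating those against $p(\cdot,t)$ to recover $\nabla_{\mathcal{M}}\log p$; in this route the genuinely hard point becomes the cancellation of the extrinsic-curvature contributions (those involving $DP$, $D^2P$ and the second fundamental form of $\mathcal{M}$), which is forced by the algebraic identities following from $P^2=P=P^\top$ and the relation between $\nabla^2\xi$ and the second fundamental form. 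Secondary technical points are the vanishing of $r_N$ and the interchange of limit, expectation and time integral, controlled by compactness of $\mathcal{M}$, positivity of $g$, parabolic smoothing of $p$ for $t>0$, and mild regularity/positivity of $q_0$ so that all entropy terms are finite.
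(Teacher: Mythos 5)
Your Step 1 algebra is correct and is in fact the paper's own bookkeeping run in reverse: the identity $G^{(\sigma_k)}_{x^{(k)}}(x^{(k+1)};b)=v^{(k)}$, the cancellation of the $|\det(U^\top U)|$ factors and Gaussian normalizations when $\sigma_k=\beta_{k+1}$, the Girsanov computation for $H(\overleftarrow{\mathbb{Q}}\,|\,\mathbb{P}_\theta)$, and the Fokker--Planck/de Bruijn identity for $\int_{\mathcal{M}}p\log p$ all appear verbatim in the paper's proof of Corollary 4.2, where they are used to deduce the corollary \emph{from} Theorem 4.1. (Minor quibble: $\tfrac12\mathbb{E}\sum_k|v^{(k)}|^2$ is not exactly $Nd/2$, since $v^{(k)}$ is conditioned on $\mathcal{F}^{(\sigma_k)}_{x^{(k)}}$; the paper disposes of this, together with your $r_N$, via complementary-error-function asymptotics, so this is harmless.)

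The genuine gap is the step you yourself flag as the ``main obstacle'': the convergence $H\big(\overleftarrow{\mathbb{Q}}^{(N)}\,|\,\mathbb{P}^{(N)}_\theta\big)\rightarrow H\big(\overleftarrow{\mathbb{Q}}\,|\,\mathbb{P}_\theta\big)$. This is where all of the analytic content of the theorem lives, and your proposal does not prove it. Strong convergence of the two projection schemes to their limiting SDEs gives at best lower semicontinuity of the relative entropy; there is no soft matching upper bound, because the discrete chains are Euler-type discretizations of the limiting diffusions, not their time marginals, so no data-processing or monotonicity-along-refinements argument applies, and relative entropy is notoriously discontinuous under weak convergence. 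Moreover, in the paper this KL convergence is obtained as a \emph{consequence} of Theorem 4.1 (it is essentially Corollary 4.2), so invoking it to prove the theorem is circular unless you supply an independent proof. Your fallback --- ``prove the limit by hand, Taylor-expanding $\tfrac12|G^{(\beta_{k+1})}_{x^{(k+1)}}(x^{(k)};s^{(k+1),\theta}-b)|^2-\tfrac12|v^{(k)}|^2$ to order $h$'' --- is exactly the paper's route, and the part you describe as ``the genuinely hard point'' (the cancellation of the curvature contributions involving $DP$, $D^2P$ and $\nabla^2\xi$) is precisely what the paper's Lemma A.3 (third-order expansion of the projection map) and the explicit $\mathcal{I}_1,\mathcal{I}_2,\mathcal{I}_3$ computation, with the cancellation between $\mathcal{I}_2$ and $\mathcal{I}_3$ followed by Gaussian expectation, Riemann-sum limit and integration by parts on $\mathcal{M}$, actually carry out. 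Until either that expansion is executed or a genuine discrete-to-continuous entropy convergence argument (with quantitative control of the one-step transition densities) is supplied, the proposal reduces the theorem to an unproved statement of equivalent difficulty rather than proving it.
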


Based on Theorem~\ref{thm-continuous-limit}, the variational bound~(\ref{variational-bound}), and its relative entropy formulation in (\ref{relative-entropy-formulation-of-variational-bound}), we obtain the following corollary, which states that our RDDPMs learn the score function as $N\rightarrow +\infty$. It elucidates the connection between our RDDPMs and Riemannian score-based generative models \cite{de2022riemannian} as $N\rightarrow +\infty$.  

\begin{corollary}
Under the same assumptions of Theorem~\ref{thm-continuous-limit}, we have, for any parameter $\theta$,
\begin{equation}
\lim_{N\rightarrow +\infty} H\big(\overleftarrow{\mathbb{Q}}^{(N)}\,|\,\mathbb{P}^{(N)}_\theta\big) =  \frac{1}{2}\mathbb{E}_{\mathbb{Q}}\int_{0}^T \big| P(X_t)s_{\theta}(X_t, t)-\nabla_{\mathcal{M}} \log p(X_t,t)\big|^2 g^2(t) dt = H(\overleftarrow{\mathbb{Q}}\,|\,\mathbb{P}_\theta),
\end{equation}
where $\overleftarrow{\mathbb{Q}}$ denotes the path measure of the time-reversal of SDE~(\ref{sde-manifold}), and $\mathbb{P}_\theta$ denotes the path measure of the SDE 
\begin{equation}\label{sde-manifold-reversed-theta}
dY_t =  g^2(T-t) P(Y_t) \big(-b(Y_t) + s_\theta(Y_t, T-t)\big) dt 
+ g(T-t) dW^\mathcal{M}_t\,, ~~ t\in[0,T]\,,
\end{equation}
starting from $Y_0=X_T$.
  \label{corollary-continuous-limit-kl}
\end{corollary}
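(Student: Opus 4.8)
The plan is to prove the two claimed equalities separately. Write $L_{\mathrm{quad}}:=\tfrac12\mathbb{E}_{\mathbb{Q}}\int_0^T|P(X_t)s_\theta(X_t,t)-\nabla_{\mathcal{M}}\log p(X_t,t)|^2 g^2(t)\,dt$ and $L_{\mathrm{cross}}:=\mathbb{E}_{\mathbb{Q}}\int_0^T\big(P(X_t)b(X_t)-\tfrac12\nabla_{\mathcal{M}}\log p(X_t,t)\big)\cdot\nabla_{\mathcal{M}}\log p(X_t,t)\,g^2(t)\,dt$, so that Theorem~\ref{thm-continuous-limit} reads $\mathrm{Loss}^{(N)}(\theta)-\tfrac{Nd}{2}\to L_{\mathrm{quad}}+L_{\mathrm{cross}}$. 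I will (i) derive an exact closed form for the discrete path-space divergence in terms of $\mathrm{Loss}^{(N)}$ and entropies, (ii) pass to the limit using Theorem~\ref{thm-continuous-limit} together with an entropy-dissipation identity that makes $L_{\mathrm{cross}}$ cancel, and (iii) identify the continuous-time divergence via Girsanov's theorem.

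\emph{Step 1 (exact discrete identity).} Adding $\mathbb{E}_{q_0}(\log q_0)$ to the equality in the last line of \eqref{variational-bound}, exactly as in the derivation of \eqref{relative-entropy-formulation-of-variational-bound}, gives $H(\overleftarrow{\mathbb{Q}}^{(N)}\,|\,\mathbb{P}^{(N)}_\theta)=\mathbb{E}_{q_0}(\log q_0)-\mathbb{E}_{\mathbb{Q}^{(N)}}\log p(x^{(N)})+\mathbb{E}_{\mathbb{Q}^{(N)}}\sum_{k=0}^{N-1}\log\tfrac{q(x^{(k+1)}|x^{(k)})}{p_\theta(x^{(k)}|x^{(k+1)})}$. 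Substituting \eqref{joint-forward-projection} and \eqref{joint-backward-projection}, the Gaussian prefactors coincide (since $\sigma_k=\beta_{k+1}$) and $|\det(U_{x^{(k)}}^\top U_{x^{(k+1)}})|=|\det(U_{x^{(k+1)}}^\top U_{x^{(k)}})|$ cancels, so only the exponents and the $(1-\epsilon)^{-1}$ factors survive. Applying $P(x^{(k)})$ to the forward update \eqref{projection-kstep-forward} and using $P\nabla\xi=0$ shows $G_{x^{(k)}}^{(\sigma_k)}(x^{(k+1)};b)=v^{(k)}$ along the forward dynamics, a standard $d$-dimensional Gaussian, hence $\tfrac12\mathbb{E}_{\mathbb{Q}^{(N)}}\sum_k|G_{x^{(k)}}^{(\sigma_k)}(x^{(k+1)};b)|^2=\tfrac{Nd}{2}$. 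Taking the prior $p(x^{(N)})$ to be the law $q^{(N)}$ of $x^{(N)}$ under the forward chain (consistent with the SDE limit, in which $\mathbb{P}_\theta$ starts from $\mathrm{Law}(X_T)$), we obtain the exact identity $H(\overleftarrow{\mathbb{Q}}^{(N)}\,|\,\mathbb{P}^{(N)}_\theta)=\mathrm{Loss}^{(N)}(\theta)-\tfrac{Nd}{2}-\mathrm{Ent}(q_0)+\mathrm{Ent}(q^{(N)})+R^{(N)}(\theta)$, where $\mathrm{Ent}(\mu)=-\int_{\mathcal{M}}\mu\log\mu\,d\sigma_{\mathcal{M}}$ and $R^{(N)}(\theta)=\mathbb{E}_{\mathbb{Q}^{(N)}}\sum_k\big[\log(1-\epsilon_{x^{(k+1)},\theta}^{(\beta_{k+1})})-\log(1-\epsilon_{x^{(k)}}^{(\sigma_k)})\big]$.

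\emph{Step 2 (passing to the limit).} The key lemma is the entropy-dissipation identity for the Fokker--Planck equation $\partial_t p_t=\mathrm{div}_{\mathcal{M}}\big(-g^2(t)P(\cdot)b(\cdot)\,p_t+\tfrac12 g^2(t)\nabla_{\mathcal{M}}p_t\big)$ of \eqref{sde-manifold}: differentiating $\mathrm{Ent}(p_t)$, using $\int_{\mathcal{M}}\partial_t p_t\,d\sigma_{\mathcal{M}}=0$ and integrating by parts on the closed manifold $\mathcal{M}$ gives $\tfrac{d}{dt}\mathrm{Ent}(p_t)=-\mathbb{E}_{X_t}\big[g^2(t)\big(P(X_t)b(X_t)-\tfrac12\nabla_{\mathcal{M}}\log p_t(X_t)\big)\cdot\nabla_{\mathcal{M}}\log p_t(X_t)\big]$, hence $L_{\mathrm{cross}}=\mathrm{Ent}(q_0)-\mathrm{Ent}(\mathrm{Law}(X_T))$. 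Since the forward chain converges (strongly, \citet{projection_diffusion}) to $X_\cdot$ and the densities $q^{(N)}$ are uniformly controlled for $T>0$ by hypoellipticity on $\mathcal{M}$, we have $\mathrm{Ent}(q^{(N)})\to\mathrm{Ent}(\mathrm{Law}(X_T))$; and $R^{(N)}(\theta)\to0$ because the failure probability $\epsilon_x^{(\sigma)}$ admits a uniform (in $x\in\mathcal{M}$) super-polynomial, in fact exponential, bound in $1/\sigma$ while the sum has only $N=T/h$ terms. Combining with Theorem~\ref{thm-continuous-limit}, the $L_{\mathrm{cross}}$ coming from $\mathrm{Loss}^{(N)}-\tfrac{Nd}{2}$ is exactly cancelled by $-\mathrm{Ent}(q_0)+\mathrm{Ent}(q^{(N)})\to-L_{\mathrm{cross}}$, leaving $H(\overleftarrow{\mathbb{Q}}^{(N)}\,|\,\mathbb{P}^{(N)}_\theta)\to L_{\mathrm{quad}}$, the first equality.

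\emph{Step 3 (continuous-time divergence) and the main obstacle.} By the time-reversal formula on $\mathcal{M}$ (Haussmann--Pardoux; cf.\ \citet{de2022riemannian}), $\overleftarrow{\mathbb{Q}}$ is the law of $d\tilde X_t=g^2(T-t)\big(-P(\tilde X_t)b(\tilde X_t)+\nabla_{\mathcal{M}}\log p(\tilde X_t,T-t)\big)dt+g(T-t)dW^{\mathcal{M}}_t$ with $\tilde X_0\sim\mathrm{Law}(X_T)$. This has the same initial law and the same diffusion coefficient $g(T-t)$ as the SDE \eqref{sde-manifold-reversed-theta} for $\mathbb{P}_\theta$, and drift difference $g^2(T-t)\big(\nabla_{\mathcal{M}}\log p(\cdot,T-t)-P(\cdot)s_\theta(\cdot,T-t)\big)$, so Girsanov's theorem on $\mathcal{M}$ gives $H(\overleftarrow{\mathbb{Q}}\,|\,\mathbb{P}_\theta)=\tfrac12\mathbb{E}_{\overleftarrow{\mathbb{Q}}}\int_0^T g^2(T-t)\big|\nabla_{\mathcal{M}}\log p(\cdot,T-t)-P(\cdot)s_\theta(\cdot,T-t)\big|^2\,dt$; substituting $t\mapsto T-t$ and using that the marginal of $\overleftarrow{\mathbb{Q}}$ at time $T-t$ equals $\mathrm{Law}(X_t)$ turns this into $L_{\mathrm{quad}}$, the second equality. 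The conceptual crux is Step~2 --- recognizing the ``extra'' cross term of Theorem~\ref{thm-continuous-limit} as precisely the entropy drop along the forward SDE, so that it cancels the data/prior normalizations and the discrete KL collapses onto the clean quadratic functional. The genuinely technical points are: (a) enough regularity of $p(\cdot,t)$ to justify the integration by parts and to invoke the manifold time-reversal formula and Girsanov's theorem (automatic for $t$ bounded away from $0$ by hypoellipticity on the compact $\mathcal{M}$, extendable to $t=0$ under a mild assumption such as $\mathrm{Ent}(q_0)>-\infty$); (b) upgrading weak convergence of the forward chain to $\mathrm{Ent}(q^{(N)})\to\mathrm{Ent}(\mathrm{Law}(X_T))$; and (c) the uniform tail bound on $\epsilon_x^{(\sigma)}$ needed for $R^{(N)}(\theta)\to0$.
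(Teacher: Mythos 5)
Your proposal is correct and follows essentially the same route as the paper's proof: the same decomposition of the discrete path-space KL into $\mathrm{Loss}^{(N)}$ plus $\theta$-independent terms, the same cancellation of the cross term (your entropy-dissipation lemma is exactly the paper's Fokker--Planck integration-by-parts computation of $\mathbb{E}_{\mathbb{Q}}[\log p(X_0,0)-\log p(X_T,T)]$), and the same identification of the limit with $H(\overleftarrow{\mathbb{Q}}\,|\,\mathbb{P}_\theta)$ via the manifold time-reversal formula and Girsanov's theorem. The only cosmetic deviations are that you take the discrete prior to be the forward chain's marginal (so you need $\mathrm{Ent}(q^{(N)})\to\mathrm{Ent}(\mathrm{Law}(X_T))$, whereas the paper works with the fixed density $p(\cdot,T)$), and you state $\tfrac12\mathbb{E}_{\mathbb{Q}^{(N)}}\sum_k|G_{x^{(k)}}^{(\sigma_k)}(x^{(k+1)};b)|^2=\tfrac{Nd}{2}$ as exact although under $\mathbb{Q}^{(N)}$ the $v^{(k)}$ are Gaussians truncated to the solvable set, so this holds only up to the exponentially small corrections that the paper (and your own bound on $\epsilon_x^{(\sigma)}$) absorbs in the limit.
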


The proofs of Theorem~\ref{thm-continuous-limit} and
Corollary~\ref{corollary-continuous-limit-kl} are presented in Appendix \ref{app-sec-proofs}.

\section{Experiments}

We evaluate our method on distributions defined on mesh manifolds, the high-dimensional special orthogonal group (both with and without a nonlinear transformation), conserved Hamiltonian surfaces in phase space, and molecular conformations under constraints. The last three novel datasets have not been studied by existing methods. Further experimental details and results can be found in Appendix~\ref{Exp_details}. In particular, parameters for each example are summarized in Table~\ref{Hyperparameters} in Appendix~\ref{Exp_details}. 

\subsection{Mesh data on learned manifolds.}
Our method can effectively handle manifolds with general geometries. For demonstration, we examine the Stanford Bunny \cite{turk1994zippered} and Spot the Cow \cite{crane2013robust}, two manifolds defined by triangle meshes. To create the target distribution, we follow the approach in \cite{jo2024generativemodelingmanifoldsmixture} and \cite{chen2024flow}, which utilizes the $k$-th clamped eigenfunction of the Laplace-Beltrami operator on meshes.

Similar to \cite{rozen2021moser}, we first learn a function $\xi: \mathbb{R}^3\rightarrow \mathbb{R}$, whose zero level set matches the manifold. We adopt the approach in \cite{gropp2020implicit}, where $\xi$ is represented by a neural network and is trained such that on mesh data $\xi$ is close to zero and $|\nabla\xi|$ is close to one. Using this approach, we obtain a function $\xi$ whose order of magnitude is $10^{-2}$ on mesh data. Then, we perform a further refinement to the dataset such that all points belong to the learned manifold $\mathcal{M}=\{x\in \mathbb{R}^3|\xi(x)=0\}$ up to a small error $10^{-5}$. The maximal distance between the original data and the refined data is smaller than $0.017$.

The function $b$ in the forward Markov chain \eqref{projection-kstep-forward} is set to be the zero function, and the prior distribution $p(x^{(N)})$ is a uniform distribution on the meshes (see discussions in Section~\ref{subsec-algo-details}). We perform the training with the learned function $\xi$. 

In Table \ref{sdf_compare}, we present the negative log-likelihood (NLL) on the test set, estimated via the second line of (\ref{variational-bound}). Our method outperforms existing manifold-based methods, including RFM \cite{chen2024flow} and LogBM \cite{jo2024generativemodelingmanifoldsmixture}. One possible explanation for this improvement is that existing methods require computing the premetric on meshes through infinite series (see Equation (16) in \cite{chen2024flow}), which introduces bias due to truncation in practice. In principle, our method is unbiased, as it does not require computing distances on the manifold. Figure~\ref{fig:mesh_result} visualizes the generated samples, demonstrating good agreement with the target data distribution.

\begin{table}[ht]
\centering
\begin{tabular}{lcccc}
\toprule
 & \multicolumn{2}{c}{\textbf{Stanford Bunny}} & \multicolumn{2}{c}{\textbf{Spot the Cow}} \\
\cmidrule(lr){2-3} \cmidrule(lr){4-5}
 & $ k = 50 $ & $ k = 100 $ & $ k = 50 $ & $ k = 100 $ \\
\midrule
RFM w/ Diff. & 1.48\scriptsize{$\pm$0.01} & 1.53\scriptsize{$\pm$0.01} & 0.95\scriptsize{$\pm$0.05} & 1.08\scriptsize{$\pm$0.05} \\
RFM w/ Bihar. & 1.55\scriptsize{$\pm$0.01} & 1.49\scriptsize{$\pm$0.01} & 1.08\scriptsize{$\pm$0.05} & 1.29\scriptsize{$\pm$0.05} \\
LogBM w/ Diff. & 1.42\scriptsize{$\pm$0.01} & 1.41\scriptsize{$\pm$0.00} &  0.99\scriptsize{$\pm$0.03} & 0.97\scriptsize{$\pm$0.03}\\
LogBM w/ Bihar. & 1.55\scriptsize{$\pm$0.02} & 1.45\scriptsize{$\pm$0.01} & 1.09\scriptsize{$\pm$0.06} & 0.97\scriptsize{$\pm$0.02}  \\
\midrule
\hspace{-0.2cm} \textit{\scriptsize Ours}\\
RDDPM  & \textbf{1.36}\scriptsize{$\pm$0.00}  & \textbf{1.31}\scriptsize{$\pm$0.01} &  \textbf{0.84}\scriptsize{$\pm$0.00} & \textbf{0.77}\scriptsize{$\pm$0.00} \\
\bottomrule
\end{tabular}
\caption{Test negative log-likelihood (NLL) on mesh datasets. Lower values indicate better performance. The table shows the mean and the standard deviation of the NLL over five independent runs. For RFM and LogBM, \textit{with Diff} and \textit{with Bihar} refer to different weighting functions used in computing the spectral distances (diffusion distance and biharmonic distance, respectively).}\label{sdf_compare}
\end{table}

\begin{figure}[ht]
\centering
\includegraphics[width=0.9\linewidth]{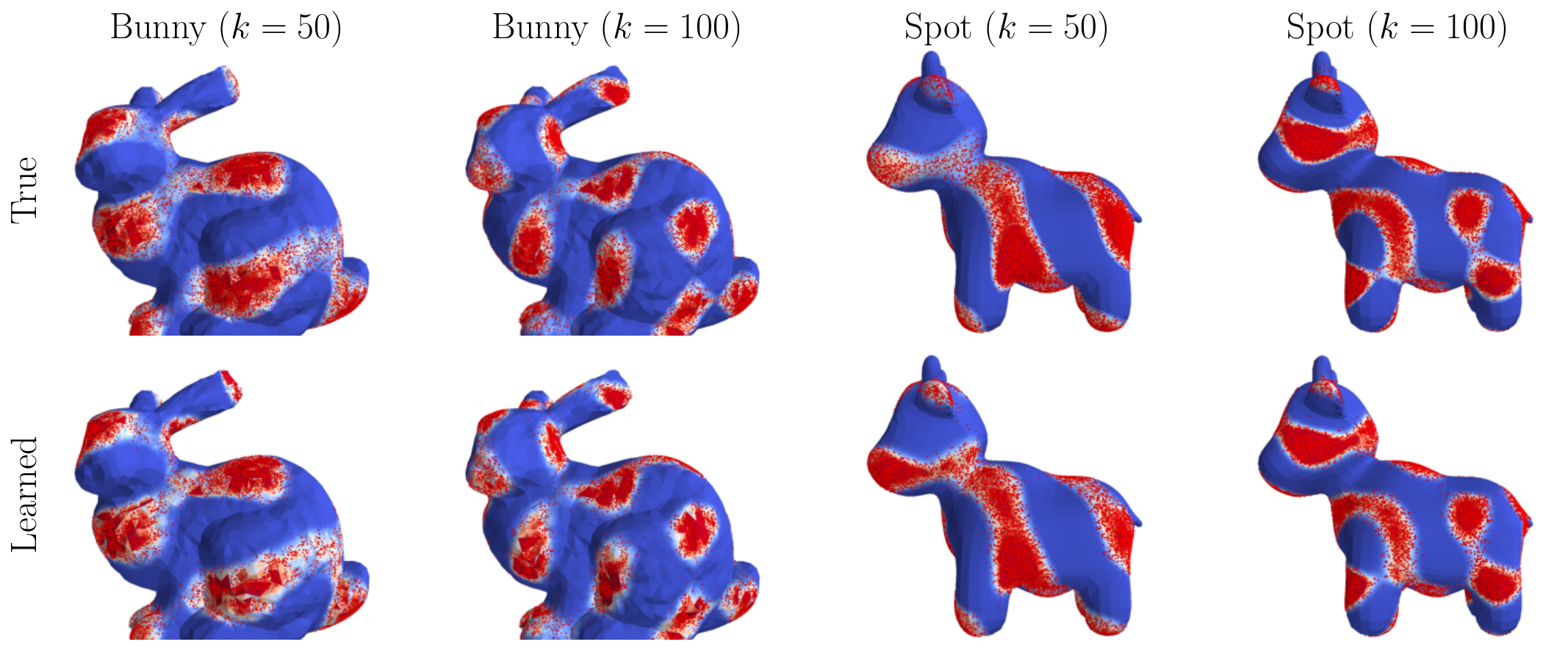}
\caption{First row: datasets and true distributions. Second row: generated samples and distributions from the trained models.}\label{fig:mesh_result}
\end{figure}

\subsection{High-dimensional special orthogonal group.}
We apply our method to the special orthogonal group $\mathrm{SO}(10)$, viewed as a $45$-dimensional submanifold embedded in $\mathbb{R}^{100}$. This manifold can be characterized as (one of the two connected components of) the zero level set of the map $\xi: \mathbb{R}^{100} \rightarrow \mathbb{R}^{55}$, whose components are defined by the upper triangular entries of the matrix $S^\top S - I_{10}$, where $S$ is a $10 \times 10$ matrix and $I_{10}$ denotes the identity matrix of size $10$.

The dataset is sampled from a multimodal distribution on $\mathrm{SO}(10)$  with $5$ modes. As in the previous example, we choose $b$ in the forward Markov chain to be zero and the prior distribution to be the uniform distribution on the manifold.

To assess the quality of generated data, we consider the statistics $\operatorname{tr}(S)$, $\operatorname{tr}(S^2)$, $\operatorname{tr}(S^4)$, and $\operatorname{tr}(S^5)$, where $\operatorname{tr}$ denotes the trace operator of matrices. Figure~\ref{fig:SOn_result_5} indicates that our learned model can generate the data distribution accurately. What is more, the distributions of the forward process at intermediate steps are also faithfully reproduced.

\begin{figure}[ht]
\centering
\includegraphics[width=0.95\linewidth]{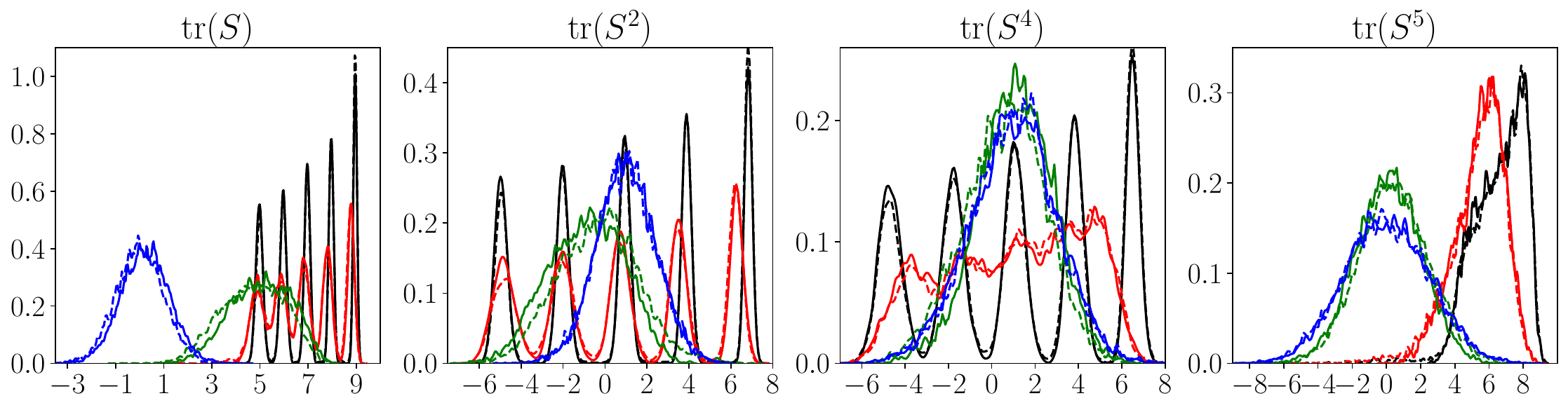}
\caption{Results for $\mathrm{SO}(10)$. 
Empirical densities of the statistics $\operatorname{tr}(S), \operatorname{tr}(S^2), \operatorname{tr}(S^4)$, and $\operatorname{tr}(S^5)$ for the forward process (solid line) and the learned reverse process (dashed line) at different steps $k = 0, 50, 200, 500$, colored in black, red, green, and blue, respectively.}\label{fig:SOn_result_5}
\end{figure}

Besides, to better demonstrate that our method can be applied to more general manifolds, we also consider the manifold $\varphi^{-1}(\mathrm{SO}(10))$, which is a nonlinear transformation of $\mathrm{SO}(10)$. Specifically, the manifold is defined as the space of matrices $S=(x_{i,j})_{1\leq i,j\leq 10}$ that satisfy the constraints $\sum_{k=1}^{10} \varphi(x_{i,k}) \varphi(x_{j,k}) -\delta_{i,j}=0$, $1\leq i\leq j\leq 10$, where $\varphi(x) = \tan(\frac{\pi}{4}x)$. Consequently, it is also a 45-dimensional submanifold of $\mathbb{R}^{100}$. The target distribution is again chosen as a multimodal distribution with five modes. The function $b$ in the forward Markov chain is defined as $b(x) = -\nabla V(x)$, where $V(x) = \frac{5}{2} \|x - I_{10}\|_{F}^2$ ($\|\cdot\|_F$ denotes the Frobenius norm). As shown in Figure \ref{fig:SOnTrans_result}, our method successfully generates data that matches the true data.

\begin{figure}[ht]
\centering
\includegraphics[width=0.95\linewidth]{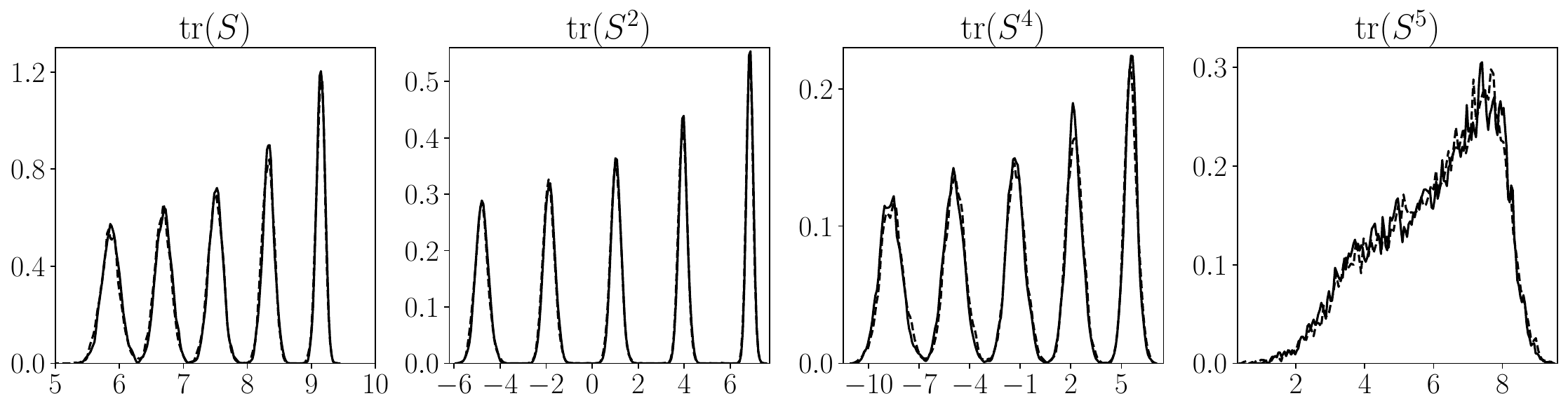}
\caption{Results for Transformed-$\mathrm{SO}(10)$. Empirical densities of the statistics  $\operatorname{tr}(S), \operatorname{tr}(S^2), \operatorname{tr}(S^4)$, and $\operatorname{tr}(S^5)$ for the samples generated by our method (dashed line) and samples from the target distribution (solid line).}\label{fig:SOnTrans_result}
\end{figure}

\subsection{Conserved Hamiltonian surface in phase space.}
We consider conserved Hamiltonian surface in phase space, where the manifold is defined as the space of states $x = (q, p)\in \mathbb{R}^{n_0}\times \mathbb{R}^{n_0}$ satisfying the constraint $\xi(q, p) = H(q, p) - E=0$, where $E>0$ is a constant, and the Hamiltonian is defined as $H(q,p)=\frac{|p|^2}{2 m}+U(q)$ with $U(q) = \frac{\kappa}{2}|q|^2 + \lambda \sum_{i=1}^{n_0} q_i^4$ representing the potential energy of a nonlinear oscillator.

In our experiment, we choose $n_0=10$, $E=10$, $m=0.5$, and $\kappa=\lambda=2$. Since the uniform distribution on the manifold is difficult to obtain, we define the function $b$ as $b(x)=-\nabla V(x)$, where $V(x) =\frac{5}{2} \sum_{i=1}^{n_0}(q_i^2 + (p_i-1)^2)$. We assume that each component $q_{i}$ follows a mixture of two Gaussian distributions. The resulting target distribution has in total $2^{n_0}$ modes.  Figure~\ref{fig:Hamilton_result} shows the empirical densities of components $q_1$, $q_2$, $p_1$, and $p_2$. The solid and dashed lines compare the distributions of the forward and the learned reverse processes at different steps $k$. The overlap of the black lines at $k=0$ shows that the generated samples match well with the target distribution.

\begin{figure}[htbp]
\centering
\includegraphics[width=0.95\linewidth]{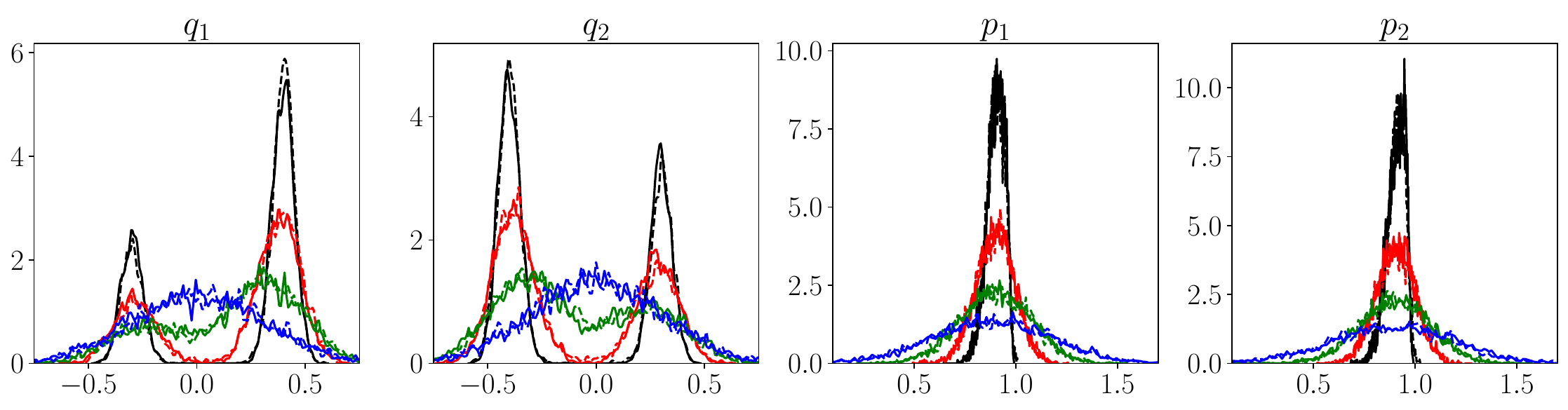}
\caption{Results for the conserved Hamiltonian surface. Empirical densities of $q_1$, $q_2$, $p_1$, and $p_2$ for the forward process (solid line) and the learned reverse process (dashed line) at different steps $k = 0, 20, 40, 150$, colored in black, red, green, and blue, respectively.}\label{fig:Hamilton_result}
\end{figure}

\subsection{Alanine dipeptide.}\label{subsec-ad}
We apply our method to alanine dipeptide, a commonly studied model system in bio-physics. The configuration of the system can be characterized by its two dihedral angles $\phi$ and $\psi$ (see Figure~\ref{dipeptide_system}). In this study, we are interested in the configurations  of the $10$ non-hydrogen atoms of the system (in $\mathbb{R}^{30}$) with the fixed angle $\phi=-70^\circ$.

In the forward process, $b$ is chosen as $-\nabla V$, where $V$ is proportional to the root mean squared deviation (RMSD) from a pre-selected  reference configuration $x^{\mathrm{ref}}$. Consequently, the prior distribution $p(x^{(N)})$ is a single-well distribution centered at $x^{\mathrm{ref}}$. Furthermore, we model $s^{(k+1),\theta}(x)$ in the reverse process using a network that preserves rotational equivariance and translational invariance. This design, together with our choice of $b$, guarantees that the distribution $p_{\theta}(x^{(0)})$ generated by our model is invariant under $\mathrm{SE}(3)~$\cite{xu2022geodiff}. We refer to Appendix~\ref{subsec-dipeptide_details} for further experimental details and to Appendix~\ref{Equivariant_NN} for theoretical support.

We employ three metrics to assess the quality of the generated configurations: the angle $\psi$, and two RMSDs (denoted by $\mathrm{RMSD}_1$ and $\mathrm{RMSD}_2$) with respect to two pre-defined reference configurations that are selected from two different wells. Figure~\ref{dipeptide_result} illustrates the empirical densities of these three metrics for the configurations generated by our model and the configurations in the dataset. The solid and dashed lines show the agreement between the distributions of the learned reverse process and the distributions of the forward process at different time steps $k$. In particular, the overlap between the lines in black, which correspond to step $k=0$, demonstrates that the distribution of the generated samples (dashed) closely matches the data distribution (solid).

\begin{figure}[htbp]
\centering
\begin{subfigure}[b]{0.28\textwidth}
\includegraphics[width=0.98\linewidth]{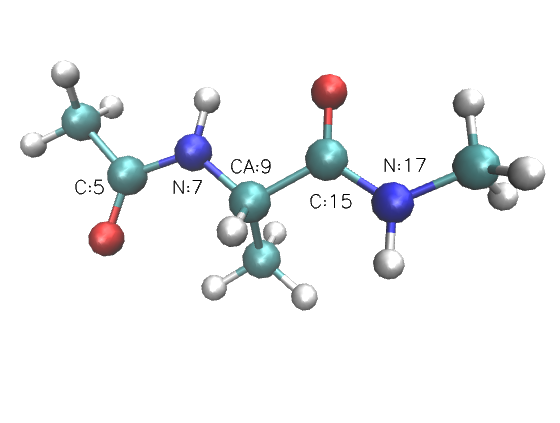}
\caption{Alanine dipeptide\label{dipeptide_system}}
\end{subfigure}
\begin{subfigure}[b]{0.70\textwidth}
\includegraphics[width=0.98\linewidth]{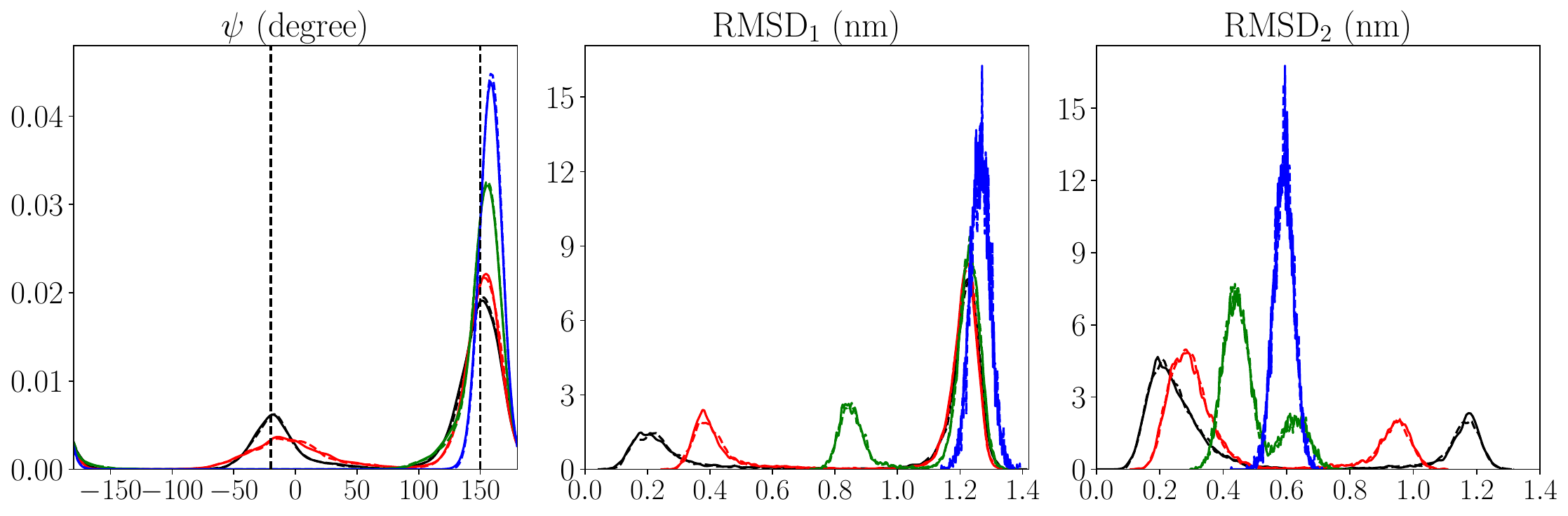}
\caption{Results for alanine dipeptide\label{dipeptide_result}}    
\end{subfigure}
\caption{
(a) Illustration of the system. Names and $1$-based indices are shown for atoms that define the dihedral angles. The dihedral angles $\phi$ and $\psi$ are defined by atoms whose $1$-based indices are $5,7,9, 15$ and $7,9, 15, 17$, respectively. (b) Empirical densities of the angle $\psi$, $\mathrm{RMSD}_1$, and $\mathrm{RMSD}_2$ for the forward process (solid line) and the learned reverse process (dashed line) at  steps $k = 0, 10, 40, 200$, colored in black, red, green, blue, respectively. The $\psi$ values of the two reference points that are used to define  $\mathrm{RMSD}_1$ and $\mathrm{RMSD}_2$ are $-20^\circ$ and $150^\circ$, respectively (as shown by the two vertical dashed lines in the left panel).
\label{dipeptide}}
\end{figure}

\section{Conclusion}
We have proposed Riemannian Denoising Diffusion Probabilistic Models for generative modeling on submanifolds. Our method does not rely on sophisticated geometric objects on manifold and it is applicable to high-dimensional manifolds with nontrivial geometry. We have provided a theoretical analysis of our method in the continuous-time limit, which elucidates its connection to Riemannian score-based generative models. We have demonstrated the strong capability of our method on manifolds from previous studies, as well as on those with complex geometries that can not be easily explored by existing methods.

\section*{Acknowledgment}
Zichen Liu acknowledges support from the China Scholarship Council under Grant No. 202306010047.
Wei Zhang is funded by the Eigene Stelle of Deutsche Forschungsgemeinschaft (DFG), project No. 524086759. 
Christof Sch{\"u}tte acknowledges funding by DFG under Germany's Excellence Strategy MATH+: The Berlin Mathematics Research Centre (EXC-2046/1, project No.\,390685689) and CRC 1114 ``Scaling Cascades in Complex Systems'' (project No.\,235221301). 
Tiejun Li acknowledges the support from National Key R\&D Program of China under grant 2021YFA1003301, and National Science Foundation of China under grant 12288101. 
This work is supported by High-performance Computing Platform of Peking University and Zuse Institute Berlin.

\appendix

\section{Proofs of the continuous-time limit} \label{app-sec-proofs}

In this section, we prove Theorem~\ref{thm-continuous-limit} and Corollary~\ref{corollary-continuous-limit-kl} in Section~\ref{sec-continuous-limit}. For simplicity of notation, we denote by $\partial_i$ the derivative with respect to $x_i$ in the ambient space, and by $I$ the identity matrix of order $n$. We use subscripts to denote components of a vector and entries of a matrix. Also recall that the orthogonal projection matrix $P(x)\in \mathbb{R}^{n\times n}$ is well defined for $x\in \mathbb{R}^n$ and has the expression 
\begin{equation}
P_{ij}(x) = \delta_{ij}-\sum_ {\alpha,\alpha'=1}^{n-d} \partial_i \xi_\alpha(x)  
(\nabla\xi^\top\nabla\xi)^{-1}_{\alpha\alpha'}(x) 
\partial_j \xi_{\alpha'}(x), \quad 1 \le i,j \le n\,,
\label{exp-pij}
\end{equation}
where $\delta_{ij}$ is the Dirac delta function. First, we present the proof of Theorem~\ref{thm-continuous-limit}.

\begin{proof}{\bf (Proof of Theorem~\ref{thm-continuous-limit}.)} Let us write the forward process in \eqref{projection-kstep-forward} as 
  \begin{equation*}
    x^{(k+1)} = x^{(k+\frac{1}{2})} + \nabla\xi(x^{(k)}) c(x^{(k+\frac{1}{2})})\,, 
  \end{equation*}
  where $x^{(k+\frac{1}{2})} = x^{(k)} + \sigma_k^2 b(x^{(k)}) + \sigma_k v^{(k)}$ and the dependence of $c$ on $x^{(k+\frac{1}{2})}$ is made explicit.
Applying Lemma~\ref{lemma-expansion-of-projection} at the end of this section, we obtain the expansion, for $1\le i\le n$,
\begin{align}\label{expansion-xk}
    & x^{(k+1)}_i \\
    = & x^{(k)}_i +\sum_{j=1}^{n} P_{ij}(x^{(k)}) \Big(\sigma_k^2 b_j(x^{(k)}) +
      \sigma_k v^{(k)}_j\Big) \notag\\
      &  + \frac{1}{2} \sum_{j,l,r,r'=1}^{n} 
      \Big((I-P)_{ir} P_{r'l} \,\partial_{r'} P_{rj}
      \Big)(x^{(k)})
      \Big(\sigma_k^2 b_j(x^{(k)}) + \sigma_k v^{(k)}_j\Big)\Big(\sigma_k^2
      b_{l}(x^{(k)}) + \sigma_k v^{(k)}_{l}\Big) \notag\\
      & + \frac{1}{6} \sum_{j,l,r=1}^{n} \sum_{\eta=1}^{n-d}
      \big(\partial_i \xi_\eta\, \partial_{jlr}^3
      c_\eta \big)(x^{(k)}) 
      \Big(
      x^{(k+\frac{1}{2})}_j-x^{(k)}_j\Big)
      \Big(
      x^{(k+\frac{1}{2})}_l-x^{(k)}_l\Big)
      \Big(
      x^{(k+\frac{1}{2})}_r-x^{(k)}_r\Big) \notag\\ 
      &  + o\big(|x^{(k+\frac{1}{2})}-x^{(k)}|^3\big) \notag\\
      =& x^{(k)}_i + \sigma_k  v^{(k)}_i + \sigma_k^2\sum_{j=1}^{n} P_{ij}(x^{(k)})  b_j(x^{(k)}) + \frac{\sigma_k^2}{2} 
\sum_{j,r,r'=1}^{n} 
      \big((I-P)_{ir}\, \partial_{r'} P_{rj}\big)(x^{(k)})  v^{(k)}_j v^{(k)}_{r'} \notag\\
      &+ \sigma_k^3 R^{(k)}_i + o(\sigma_k^3)\,,
\end{align}
where we have used the identity $\sum_{j=1}^n P_{ij}(x^{(k)}) v_j^{(k)}=v^{(k)}_i$ (since $v^{(k)}$ is a tangent vector), and $R_i^{(k)}$ is a term that satisfies $\sum_{i'=1}^n
  P_{ii'}(x^{(k)})R_{i'}^{(k)}=0$, for $1\le i \le n$.

  With the expansion above, we compute the loss function in 
  \eqref{training-loss}. Using \eqref{expansion-xk}, the relation
  $\beta_{k+1}=\sigma_k=\sqrt{h}g(kh)$, and the assumption that
  $s^{(k+1),\theta}(x^{(k+1)}) = s_\theta(x^{(k+1)}, (k+1)h)\in \mathbb{R}^n$, we can derive
\begin{align}
& \beta_{k+1}^2 \Big|P(x^{(k+1)})\Big(s^{(k+1),\theta}(x^{(k+1)}) - b(x^{(k+1)}) +
\frac{x^{(k+1)}-x^{(k)}}{\beta_{k+1}^2}\Big)\Big|^2 \notag \\
 =&  \sigma_{k}^2 \sum_{i=1}^n\bigg|\sum_{i'=1}^n
 P_{ii'}(x^{(k+1)})\Big[s_{\theta, i'}(x^{(k+1)}, (k+1)h) - b_{i'}(x^{(k+1)})
+ \sum_{j=1}^n P_{i'j}(x^{(k)}) b_j(x^{(k)})  \notag \\
&  \quad \quad   + \frac{1}{2} \sum_{j,r,r'=1}^n
\big((I-P)_{i'r}\,\partial_{r'} P_{rj}\big)(x^{(k)}) v^{(k)}_j v^{(k)}_{r'} 
+ \frac{ v^{(k)}_{i'}}{\sigma_k} + \sigma_k R^{(k)}_{i'} + o(\sigma_k)\Big]\bigg|^2 \notag \\
=& \mathcal{I}_1 + \mathcal{I}_2 + \mathcal{I}_3 + o(\sigma_k^2)\,,  \label{loss-term-as-sum-of-i123}
\end{align}
where the three terms on the last line are defined as 
\begin{align*}
    \mathcal{I}_1 := & \sigma_{k}^2 \sum_{i=1}^n\bigg|\sum_{i'=1}^n
    P_{ii'}(x^{(k+1)})\Big[s_{\theta,i'}(x^{(k+1)}, (k+1)h) - b_{i'}(x^{(k+1)}) + \sum_{j=1}^n P_{i'j}(x^{(k)}) b_j(x^{(k)})  \\
    & \quad \quad + \frac{1}{2} \sum_{j,r,r'=1}^n \big((I-P)_{i'r}\,\partial_{r'} P_{rj}\big)(x^{(k)}) v^{(k)}_j v^{(k)}_{r'} + \sigma_k R^{(k)}_{i'}
    \Big]\bigg|^2\,,\\
    \mathcal{I}_2 := & \sum_{i=1}^n\Big(\sum_{i'=1}^n P_{ii'}(x^{(k+1)}) v^{(k)}_{i'}\Big)^2\,,\\
\mathcal{I}_3: = &2\sigma_{k} \sum_{i,i'=1}^n
    P_{ii'}(x^{(k+1)})\bigg[s_{\theta,i'}(x^{(k+1)},(k+1)h) - b_{i'}(x^{(k+1)}) + \sum_{j=1}^n P_{i'j}(x^{(k)}) b_j(x^{(k)})  \\
    &  \quad \quad  + \frac{1}{2} \sum_{j,r,r'=1}^n \big((I-P)_{i'r}\,\partial_{r'}
    P_{rj}\big)(x^{(k)}) v^{(k)}_j v^{(k)}_{r'} + \sigma_k R^{(k)}_{i'}
    \bigg] v^{(k)}_{i}\,,
\end{align*}
respectively. In the following, we derive the expansions of the three terms above. For $\mathcal{I}_1$, expanding the functions $P, s_\theta, b$ using \eqref{expansion-xk}, we can derive 
\begin{align}
\mathcal{I}_1 =& \sigma_{k}^2 \sum_{i=1}^n\bigg|\sum_{i'=1}^n P_{ii'}(x^{(k)})\Big[s_{\theta,i'}(x^{(k)},kh) - b_{i'}(x^{(k)}) +
\sum_{j=1}^n P_{i'j}(x^{(k)}) b_j(x^{(k)}) \notag \\
& \quad \quad + \frac{1}{2} 
\sum_{j,r,r'=1}^n \big((I-P)_{i'r}\,\partial_{r'} P_{rj}\big)(x^{(k)}) v^{(k)}_j v^{(k)}_{r'}\Big] + o(1) \bigg|^2 \notag  \\
=& \sigma_{k}^2 \sum_{i=1}^n\Big|\sum_{i'=1}^n
P_{ii'}(x^{(k)})s_{\theta,i'}(x^{(k)},kh)\Big|^2 + o(\sigma_k^2)\,, \label{expansion-I1} 
\end{align}
where we have used the relations $P^2=P$ and $P(I-P)=0$ satisfied by the orthogonal projection matrix $P$ to derive the second equality. 
For $\mathcal{I}_2$, using the relation $P^2=P$ and \eqref{expansion-xk}, we can compute 
  \begin{align}
  \mathcal{I}_2 =& \sum_{i,i'=1}^n P_{ii'}(x^{(k+1)})
      v^{(k)}_{i}v^{(k)}_{i'} \notag\\
       =&\sum_{i,i'=1}^n P_{ii'}(x^{(k)}) v^{(k)}_{i}v^{(k)}_{i'} 
    + \sum_{i,i',r=1}^n \partial_r P_{ii'} (x^{(k)})
    \big(x^{(k+1)}_r - x^{(k)}_r\big) v^{(k)}_{i}v^{(k)}_{i'} \notag\\
    & + \frac{1}{2} \sum_{i,i'=1}^n\sum_{r,r'=1}^n \partial^2_{rr'}
    P_{ii'}(x^{(k)}) v^{(k)}_{i}v^{(k)}_{i'} 
    (x^{(k+1)}_r - x^{(k)}_r) (x^{(k+1)}_{r'} - x^{(k)}_{r'})  \notag  \\
    & + o(|x^{(k+1)} - x^{(k)}|^2)\notag \\
    =& |v^{(k)}|^2 + \sum_{i,i',r=1}^n \partial_r P_{ii'}(x^{(k)})
    \big(x^{(k+1)}_r - x^{(k)}_r\big) v^{(k)}_{i}v^{(k)}_{i'} \notag  \\
    &+ \frac{\sigma_k^2}{2} \sum_{i,i'=1}^n\sum_{r,r'=1}^n \partial^2_{rr'}
    P_{ii'}(x^{(k)}) v^{(k)}_{i}v^{(k)}_{i'} v^{(k)}_r v^{(k)}_{r'} + o(\sigma_k^2) \,.
      \label{expansion-I2-pre-1}
  \end{align}
Let's compute the three terms in \eqref{expansion-I2-pre-1}. Using the expression of $P_{ii'}$ in (\ref{exp-pij}), the fact that $\sum_{i=1}^n \partial_i \xi_\alpha(x^{(k)}) v^{(k)}_i = 0$, and the product rule, it is straightforward to verify that, for $1 \le r\le n$,
\begin{equation}\label{expansion-I2-pre-2}
  \sum_{i,i'=1}^n \partial_{r} P_{ii'} (x^{(k)}) v^{(k)}_{i}v^{(k)}_{i'} 
  = 
  -\sum_{i,i'=1}^n \partial_{r} 
  \Big(\sum_{\alpha,\alpha'=1}^{n-d} \partial_i \xi_\alpha  
(\nabla\xi^\top\nabla\xi)^{-1}_{\alpha\alpha'}
\partial_{i'} \xi_{\alpha'}\Big)
   (x^{(k)}) v^{(k)}_{i}v^{(k)}_{i'} = 0\,.
\end{equation} 
Similarly, we can verify that 
\begin{align}
&\sum_{i,i',r,r'=1}^n \partial^2_{rr'}
P_{ii'}(x^{(k)})\, v^{(k)}_{i}v^{(k)}_{i'} v^{(k)}_r v^{(k)}_{r'} \notag \\
=& -2\sum_{i,i',r,r'=1}^n\sum_{\alpha,\alpha'=1}^{n-d}\Big(\partial^2_{ir} \xi_\alpha (\nabla\xi^\top\nabla\xi)^{-1}_{\alpha\alpha'} \partial^2_{i'r'} \xi_{\alpha'}\Big)(x^{(k)}) \,v^{(k)}_{i}v^{(k)}_{i'} v^{(k)}_r v^{(k)}_{r'}\,. \label{expansion-I2-pre-3}
\end{align}
Hence, substituting \eqref{expansion-I2-pre-2} and \eqref{expansion-I2-pre-3} into \eqref{expansion-I2-pre-1}, we obtain
 \begin{align}
  \mathcal{I}_2 =&
   |v^{(k)}|^2 - \sigma_k^2
    \sum_{i,i',r,r'=1}^n\sum_{\alpha,\alpha'=1}^{n-d}\Big(\partial^2_{ir} \xi_\alpha (\nabla\xi^\top\nabla\xi)^{-1}_{\alpha\alpha'}\partial^2_{i'r'} \xi_{\alpha'}\Big)(x^{(k)}) \,v^{(k)}_{i}v^{(k)}_{i'} v^{(k)}_r v^{(k)}_{r'}
 + o(\sigma_k^2). \label{expansion-I2}
    \end{align}
For $\mathcal{I}_3$, we have
  \begin{align}
    \mathcal{I}_3 = & 2\sigma_{k} \sum_{i,i'=1}^n P_{ii'}(x^{(k)})\bigg[s_{\theta,i'}(x^{(k)},kh) - b_{i'}(x^{(k)}) + \sum_{j=1}^n P_{i'j}(x^{(k)}) b_j(x^{(k)}) \notag \\
    & \quad\quad + \frac{1}{2} \sum_{j,r,r'=1}^n
    \big((I-P)_{i'r}\, \partial_{r'} P_{rj}\big)(x^{(k)}) v^{(k)}_j v^{(k)}_{r'} + \sigma_k R^{(k)}_{i'} \bigg] v^{(k)}_{i}\notag\\
    & + 2\sigma^2_k\sum_{i,i',r=1}^n \partial_r (P_{ii'}(s_{\theta,i'} -
    b_{i'}))(x^{(k)},kh)\,  v^{(k)}_{r} v^{(k)}_{i} \notag\\
    & + 2\sigma^2_k\sum_{i,i',r,j=1}^n \partial_r P_{ii'}(x^{(k)}) 
    P_{i'j}(x^{(k)}) b_j(x^{(k)}) \,  v^{(k)}_{r} v^{(k)}_{i}\notag\\
    & + \sigma^2_k\sum_{i,i',r,j,r',j'=1}^n \partial_r P_{ii'}(x^{(k)}) 
    \big((I-P)_{i'j'}\, \partial_{r'} P_{j'j}\big)(x^{(k)}) v^{(k)}_j v^{(k)}_{r'}
    \,  v^{(k)}_{r} v^{(k)}_{i}+ o(\sigma_k^2)\notag\\
    =&2 \sigma_{k} 
\sum_{i,i'=1}^n P_{ii'}(x^{(k)})s_{\theta,i'}(x^{(k)},kh) v^{(k)}_{i} + 2\sigma^2_k \sum_{i,i',r=1}^n
    \partial_{r} (P_{ii'}(s_{\theta,i'} -
    b_{i'}))(x^{(k)},kh)  v^{(k)}_{r} v^{(k)}_{i} \notag\\
    & + 2\sigma^2_k\sum_{i,i',r,j=1}^n \partial_r P_{ii'}(x^{(k)}) 
    P_{i'j}(x^{(k)}) b_j(x^{(k)}) \,  v^{(k)}_{r} v^{(k)}_{i} \notag \\
    & + \sigma^2_k\sum_{i,i',r,j,r',j'=1}^n \partial_r P_{ii'}(x^{(k)}) 
    \big((I-P)_{i'j'}\, \partial_{r'} P_{j'j}\big)(x^{(k)}) v^{(k)}_j v^{(k)}_{r'}
    \,  v^{(k)}_{r} v^{(k)}_{i}+ o(\sigma_k^2)\,,  \label{expansion-I3-0}
  \end{align}
where we have used Taylor expansion with \eqref{expansion-xk} and the fact that $|s_{\theta}(x,(k+1)h) - s_{\theta}(x,kh)| =
O(h) = O(\sigma_k^2)$  to derive the first equality, and we have used the relations $P^2=P$, $P(I-P)=0$, and 
$\sum_{i'=1}^n P_{ii'}(x^{(k)})R^{(k)}_{i'}=0$ to derive the second equality. We further simplify the last two terms in the expression above. Notice that, similar to \eqref{expansion-I2-pre-2}, we can verify that 
\begin{equation}
\sum_{i,i',r=1}^n \partial_r P_{ii'}(x^{(k)}) 
    P_{i'j}(x^{(k)}) \, v^{(k)}_{r} v^{(k)}_{i} = 0\,, \quad 1 \le j \le n\,.\label{I3-pre-1}
\end{equation}
For the last term in \eqref{expansion-I3-0},  we can derive 
\begin{align}\label{I3-pre-2}
&\sum_{i,i',r,j,r',j'=1}^n \partial_r P_{ii'}(x^{(k)}) 
    \big((I-P)_{i'j'}\, \partial_{r'} P_{j'j}\big)(x^{(k)}) v^{(k)}_j v^{(k)}_{r'}
    \,  v^{(k)}_{r} v^{(k)}_{i} \notag \\
=    &\sum_{i',r,j,r',j'=1}^n 
    \big(\partial_r P_{i'j'}\, \partial_{r'} P_{j'j}\big)(x^{(k)}) v^{(k)}_j v^{(k)}_{r'}
    \,  v^{(k)}_{r} v^{(k)}_{i'} \notag \\
=    &\sum_{i',r,j,r',j'=1}^n      \bigg[\sum_{\alpha,\alpha',\eta,\eta'=1}^{n-d} \Big(\partial^2_{i'r} \xi_\alpha  
(\nabla\xi^\top\nabla\xi)^{-1}_{\alpha\alpha'}
\partial_{j'} \xi_{\alpha'}\Big)
    \Big( \partial^2_{jr'} \xi_\eta  
(\nabla\xi^\top\nabla\xi)^{-1}_{\eta\eta'}
\partial_{j'} \xi_{\eta'}\Big)\bigg](x^{(k)}) \notag \\
& \quad \quad \quad \quad \quad \cdot v^{(k)}_j v^{(k)}_{r'} v^{(k)}_{r} v^{(k)}_{i'} \notag \\
=    &\sum_{i',r,j,r'=1}^n      \bigg[\sum_{\alpha,\eta=1}^{n-d} \Big(\partial^2_{i'r} \xi_\alpha  
(\nabla\xi^\top\nabla\xi)^{-1}_{\alpha\eta}
 \partial^2_{jr'} \xi_\eta  
\Big)\bigg](x^{(k)})v^{(k)}_j v^{(k)}_{r'}
    \,  v^{(k)}_{r} v^{(k)}_{i'}\,,    
\end{align}
where the first equation follows by applying the product rule to the identity $P(I-P)=0$ and using the relation $\sum_{i=1}^n P_{ii'}v_i^{(k)}=v_{i'}^{(k)}$, the second equation follows from the expression (\ref{exp-pij}) and the fact that several terms vanish due to the orthogonality relation $\sum_{i=1}^n \partial_i \xi_\alpha(x^{(k)}) v_i^{(k)}=0$, and the last equation follows from the fact that $\sum_{j'=1}^n \partial_{j'}\xi_{\alpha'}\partial_{j'}\xi_{\eta'}=(\nabla\xi^\top \nabla\xi)_{\alpha'\eta'}$.
Combining \eqref{expansion-I3-0}, \eqref{I3-pre-1},
and \eqref{I3-pre-2}, we obtain 
\begin{align}
    \mathcal{I}_3
    = &
    2 \sigma_{k} 
\sum_{i,i'=1}^n P_{ii'}(x^{(k)})s_{\theta,i'}(x^{(k)},kh) v^{(k)}_{i} + 2\sigma^2_k \sum_{i,i',r=1}^n
    \partial_{r} (P_{ii'}(s_{\theta,i'} -
    b_{i'}))(x^{(k)},kh)  v^{(k)}_{r} v^{(k)}_{i} \notag\\
    &+\sigma^2_k\sum_{i',r,j,r'=1}^n      \bigg[\sum_{\alpha,\eta=1}^{n-d} \Big(\partial^2_{i'r} \xi_\alpha  
(\nabla\xi^\top\nabla\xi)^{-1}_{\alpha\eta}
 \partial^2_{jr'} \xi_\eta  
\Big)\bigg](x^{(k)})v^{(k)}_j v^{(k)}_{r'}
    \,  v^{(k)}_{r} v^{(k)}_{i'}+ o(\sigma_k^2)\,.\label{expansion-I3}
\end{align}
Substituting \eqref{expansion-I1}, \eqref{expansion-I2}, and \eqref{expansion-I3} into \eqref{loss-term-as-sum-of-i123},  we obtain (after cancellation of terms in $\mathcal{I}_2$ and $\mathcal{I}_3$)
\begin{align}
&  \beta_{k+1}^2 \bigg|P(x^{(k+1)})\Big(s^{(k+1),\theta}(x^{(k+1)}) -
b(x^{(k+1)}) + \frac{x^{(k+1)}-x^{(k)}}{\beta_{k+1}^2}\Big)\bigg|^2 \notag \\
=& \mathcal{I}_1 + \mathcal{I}_2 + \mathcal{I}_3 + o(\sigma_k^2)  \notag \\
= &
\sigma_{k}^2 \big|
P(x^{(k)})s_{\theta}(x^{(k)},kh)\big|^2 
+ \big|v^{(k)}\big|^2+ 2\sigma^2_k \sum_{i,i',r=1}^n
\partial_r (P_{ii'}(s_{\theta,i'} -
b_{i'}))(x^{(k)},kh)  v^{(k)}_{r} v^{(k)}_{i} \notag \\
& +2 \sigma_{k} \sum_{i,i'=1}^n P_{ii'}(x^{(k)})s_{\theta,i'}(x^{(k)},kh) v^{(k)}_{i} + o(\sigma_k^2)\,. \label{loss-term-expansion}
\end{align}

Now, we consider the terms on the right-hand side of the above expression in the limit $N\rightarrow +\infty$.
Using the fact that the forward process $x^{(k)}$ converges to the SDE~(\ref{sde-manifold})~\cite{projection_diffusion}
, we have 
\begin{equation*}
\lim_{N\rightarrow+\infty}\mathbb{E}_{\mathbb{Q}^{(N)}} \Big(\sum_{k=0}^{N-1}
    \sigma_{k}^2 \big|
    P(x^{(k)})s_{\theta}(x^{(k)},kh)\big|^2\Big)
    =\mathbb{E}_{\mathbb{Q}}\int_{0}^T \Big(\big|P(X_t)s_{\theta}(X_t, t)\big|^2\Big) g^2(t) dt\,.
    \end{equation*}
Since $v^{(k)}$ is the standard Gaussian random variable in $T_{x^{(k)}}\mathcal{M}$ confined in $\mathcal{F}_{x^{(k)}}^{(\sigma_k)}$ (the set of tangent vectors with which \eqref{projection-kstep-forward} has a solution), the set $\mathcal{F}_{x^{(k)}}^{(\sigma_k)}$ increases to the entire tangent space $T_{x^{(k)}}\mathcal{M}$ as $N\rightarrow +\infty$, we can show that (by bounded convergence theorem)
\begin{equation*}
\lim_{N\rightarrow+\infty}\mathbb{E}_{\mathbb{Q}^{(N)}}
\Big(\sigma_{k} \sum_{i,i'=1}^n P_{ii'}(x^{(k)})s_{\theta,i'}(x^{(k)},kh) v^{(k)}_{i}\Big)
=0, 
\end{equation*}
and
\begin{align*}
&\lim_{N\rightarrow+\infty}\mathbb{E}_{\mathbb{Q}^{(N)}}\Big(
\sigma^2_k \sum_{i,i',r=1}^n
    \partial_r (P_{ii'}(s_{\theta,i'} -
    b_{i'}))(x^{(k)},kh)  v^{(k)}_{r} v^{(k)}_{i}\Big)\\
=&\mathbb{E}_{\mathbb{Q}}\bigg[\int_0^T \Big(\sum_{i,i',r=1}^n \partial_r
    (P_{ii'}(s_{\theta,i'} - b_{i'}))(X_t,t) P_{ri}(X_t) \Big) g^2(t) dt\bigg]\\
    =&\mathbb{E}_{\mathbb{Q}}\bigg[\int_0^T \Big(\mbox{div}_{\mathcal{M}} \big(P(s_{\theta} - b)\big)(X_t,t)\Big) g^2(t) dt\bigg]\,,
\end{align*}
where the equality above can be verified using the fact that $v^{(k)}=P(x^{(k)}) z^{(k)}$ and $z^{(k)}$ converges to a standard Gaussian random variable in $\mathbb{R}^n$ as $N\rightarrow +\infty$.
 
Substituting \eqref{loss-term-expansion} into
\eqref{training-loss} and using the above equations, we can derive 
  \begin{align*}
  &\lim_{N\rightarrow+\infty} \Big(\mbox{Loss}^{(N)}(\theta)-\frac{1}{2}\mathbb{E}_{\mathbb{Q}^{(N)}}\sum_{k=0}^{N-1}\big|v^{(k)}\big|^2\Big)\\
    = & \mathbb{E}_{\mathbb{Q}}\int_{0}^T \bigg[\frac{1}{2}\big|P(X_t)s_{\theta}(X_t, t)\big|^2 + 
    \mbox{div}_{\mathcal{M}} \big(P(s_{\theta} - b)\big)(X_t,t)\bigg] g^2(t) dt\\
    = & \int_{0}^T \bigg[\int_{\mathcal{M}} \Big(\frac{1}{2}\big| P(x)s_{\theta}(x, t)\big|^2 + 
    \mbox{div}_{\mathcal{M}} \big(P(s_{\theta} - b)\big)(x,t)\Big) p(x,t)\, d\sigma_{\mathcal{M}}(x)\bigg]  g^2(t)\,dt\\
    =& \int_{0}^T \bigg[\int_{\mathcal{M}} \Big(\frac{1}{2}\big| P(x)s_{\theta}(x, t)\big|^2  
     -  (P(s_{\theta} - b))(x,t) \cdot \nabla_{\mathcal{M}} \log
     p(x,t)\Big) p(x,t)\, d\sigma_{\mathcal{M}}(x)\bigg]  g^2(t)\,dt  \\
    =& \frac{1}{2}\int_{0}^T \bigg[\int_{\mathcal{M}} \big| P(x)s_{\theta}(x,
    t)-\nabla_{\mathcal{M}} \log p(x,t)\big|^2 p(x,t)\,
    d\sigma_{\mathcal{M}}(x)\bigg]  g^2(t)\,dt 
    \\
    &+ \int_{0}^T \bigg[\int_{\mathcal{M}} \Big(\big(P(x)b(x)-\frac{1}{2}\nabla_{\mathcal{M}}
    \log p(x,t)\big)\cdot \nabla_{\mathcal{M}} \log p(x,t)\Big) p(x,t)\,d\sigma_{\mathcal{M}}(x)\bigg]  g^2(t)\,dt \\
    =& \mathbb{E}_{\mathbb{Q}}\bigg[\frac{1}{2}\int_{0}^T \big| P(X_t)s_{\theta}(X_t,
    t)-\nabla_{\mathcal{M}} \log p(X_t,t)\big|^2   g^2(t)\,dt \\
    &+ \int_{0}^T \Big(P(X_t)b(X_t)-\frac{1}{2}\nabla_{\mathcal{M}} \log
    p(X_t,t)\Big)\cdot \nabla_{\mathcal{M}} \log p(X_t,t) \, g^2(t) \,dt    \bigg]\,,
  \end{align*}
where we have used integration by parts on $\mathcal{M}$, and the expression $\mbox{div}_{\mathcal{M}} f=\sum_{i,r=1}^n P_{ir}\partial_r f_i$
for $f:\mathcal{M}\rightarrow \mathbb{R}^n$ (which can be verified using Lemma~A.1 in~\cite{zhang2020}). \hfill
\end{proof}

Next, we present the proof of Corollary~\ref{corollary-continuous-limit-kl}.

\begin{proof}{\bf (Proof of Corollary~\ref{corollary-continuous-limit-kl}.)} Using the assumption $\beta_{k+1}=\sigma_k$, the projection
  scheme in \eqref{projection-kstep-forward} and the relation $P(x^{(k)})\nabla\xi(x^{(k)})=0$, we can simplify the constant $C^{(N)}$ in \eqref{constant-c} as
\begin{equation}
C^{(N)}=-\mathbb{E}_{\mathbb{Q}^{(N)}} \Big(\log p(x^{(N)})+\frac{1}{2}\sum_{k=0}^{N-1} |v^{(k)}|^2
+ \sum_{k=0}^{N-1}\log \big(1-\epsilon_{x^{(k)}}^{(\sigma_k)}\big)
\Big).
\end{equation}
Therefore, using 
the definition of relative entropy (see \eqref{relative-entropy-formulation-of-variational-bound}), the loss function in \eqref{training-loss}, the constant $C^{(N)}$ in \eqref{constant-c}, and 
applying Theorem~\ref{thm-continuous-limit}, we have 
\begin{align}\label{proof-corolloary-1}
    & \lim_{N\rightarrow +\infty}
H\big(\overleftarrow{\mathbb{Q}}^{(N)}\,|\,\mathbb{P}^{(N)}_\theta\big) \notag \\
    =& \lim_{N\rightarrow +\infty} \left[\mathrm{Loss}^{(N)}(\theta) + C^{(N)} + \mathbb{E}_{\mathbb{Q}^{(N)}}\Big(\sum_{k=0}^{N-1}\log \big(1-\epsilon_{x^{(k+1)},\theta}^{(\sigma_k)}\big)\Big)\right] + \mathbb{E}_{q_0}(\log q_0)  \notag \\
    =& \lim_{N\rightarrow +\infty} \left[\mathrm{Loss}^{(N)}(\theta) -\mathbb{E}_{\mathbb{Q}^{(N)}} \left(\log p(x^{(N)})
    +\frac{1}{2}\sum_{k=0}^{N-1} |v^{(k)}|^2
+ \sum_{k=0}^{N-1}\log \frac{1-\epsilon_{x^{(k)}}^{(\sigma_k)}}{1-\epsilon_{x^{(k+1)},\theta}^{(\sigma_k)}}\right)\right]  \notag  \\
& + \mathbb{E}_{q_0}(\log q_0)  \notag \\
=& \lim_{N\rightarrow +\infty} \Big(\mathrm{Loss}^{(N)}(\theta) -\mathbb{E}_{\mathbb{Q}^{(N)}} \log p(x^{(N)})
-\frac{1}{2}\sum_{k=0}^{N-1} |v^{(k)}|^2 \Big) + \mathbb{E}_{q_0}(\log q_0) \notag  \\
=& \mathbb{E}_{\mathbb{Q}} \bigg[\log p(X_0,0) -\log p(X_T,T) + \frac{1}{2}\int_{0}^T \big|
P(X_t)s_{\theta}(X_t, t)-\nabla_{\mathcal{M}} \log p(X_t,t)\big|^2 g^2(t)\, dt  \notag \\
    & + \int_{0}^T \Big(P(X_t)b(X_t)-\frac{1}{2}\nabla_{\mathcal{M}} \log
    p(X_t,t)\Big)\cdot \nabla_{\mathcal{M}} \log p(X_t,t)\,  g^2(t)\, dt
    \bigg] \,, 
\end{align}
where the third equality follows because the terms containing $\epsilon_{x^{(k)}}^{(\sigma_k)}$ and $\epsilon_{x^{(k+1)},\theta}^{(\sigma_k)}$ converge to zero sufficiently fast as $N\rightarrow +\infty$. Note that the density $p(x,t)$ of SDE~(\ref{sde-manifold}) solves the Fokker-Planck equation
\begin{equation}
  \frac{\partial p}{\partial t}(x,t) = -g^2(t) \mbox{div}_{\mathcal{M}} \big(P(x) b(x) p(x,t)\big)  + \frac{g^2(t)}{2} \Delta_{\mathcal{M}} p(x,t)\,, \quad x\in \mathcal{M},\quad t\in[0,T]\,.
  \label{fp}
\end{equation}
Therefore, we have 
\begin{align}\label{proof-corolloary-2}
  & \mathbb{E}_{\mathbb{Q}} \Big[\log p(X_0,0) -\log p(X_T,T)\Big] \notag \\
   = &\int_{\mathcal{M}} \log p(x,0) p(x,0)\, d\sigma_{\mathcal{M}}(x) -
    \int_{\mathcal{M}} \log p(x,T) p(x,T)\, d\sigma_{\mathcal{M}}(x)  \notag  \\
    = & -\int_0^T \frac{d}{dt} \Big(\int_{\mathcal{M}} \log p(x,t)\, p(x,t)\, d\sigma_{\mathcal{M}}(x)\Big) \,dt  \notag \\
    = & -\int_0^T \bigg[\int_{\mathcal{M}} \big(\log p(x,t) + 1\big) \frac{\partial p}{\partial t} (x,t)\, d\sigma_{\mathcal{M}}(x)\bigg] dt  \notag \\
    = & -\int_0^T \bigg[\int_{\mathcal{M}} \big(\log p(x,t) + 1\big)
    \Big(- \mbox{div}_{\mathcal{M}} \big(P(x) b(x) p(x,t)\big)  + \frac{1}{2} \Delta_{\mathcal{M}} p(x,t)\Big) g^2(t) d\sigma_{\mathcal{M}}(x)\bigg] dt  \notag \\
    = & -\int_0^T \bigg[\int_{\mathcal{M}}  \Big(\big(P(x) b(x)  -
    \frac{1}{2}\nabla_{\mathcal{M}} \log p(x,t)\big) \cdot \nabla_{\mathcal{M}} \log p(x,t)\Big)\, p(x,t)\, d\sigma_{\mathcal{M}}(x)\bigg] g^2(t) \,dt  \notag \\
    =& -\mathbb{E}_{\mathbb{Q}} \bigg[\int_{0}^T \Big(P(X_t)b(X_t)-\frac{1}{2}\nabla_{\mathcal{M}} \log
    p(X_t,t)\Big)\cdot \nabla_{\mathcal{M}} \log p(X_t,t) \, g^2(t) \,dt\bigg] \,,  
\end{align}
where we have used \eqref{fp} to derive the fourth equality and integration by parts on $\mathcal{M}$ to derive the fifth equality. 
  Combining \eqref{proof-corolloary-1} and \eqref{proof-corolloary-2}, we obtain
\begin{equation}
     \lim_{N\rightarrow +\infty}
H\big(\overleftarrow{\mathbb{Q}}^{(N)}\,|\,\mathbb{P}^{(N)}_\theta\big) 
    = \mathbb{E}_{\mathbb{Q}} \bigg[\frac{1}{2}\int_{0}^T \big| P(X_t)s_{\theta}(X_t, t)-\nabla_{\mathcal{M}} \log p(X_t,t)\big|^2  g^2(t)\, dt\,\bigg]\,. 
  \label{proof-corrolary-part-1}
\end{equation}

Finally, note that $\overleftarrow{\mathbb{Q}}$ is the path measure of the time-reversal $Y_t=X_{T-t}$ of SDE~(\ref{sde-manifold}), which satisfies~\cite[Theorem~3.1]{de2022riemannian}
\begin{equation}
  dY_t =  g^2(T-t) \Big(-P(Y_t) b(X_t)+ \nabla_{\mathcal{M}} \log p(Y_t, T-t)\Big)\,dt + g(T-t) dW^\mathcal{M}_t\,, \quad t\in[0,T]\,,
  \label{sde-manifold-reversal}
\end{equation}
and $\mathbb{P}_\theta$ is the path measure of SDE~(\ref{sde-manifold-reversed-theta}).
Applying Girsanov's theorem~\cite[Theorem~8.1.2]{analysis_manifold}, we obtain
\begin{align}
\frac{d \mathbb{P}_\theta}{d \overleftarrow{\mathbb{Q}}} = \exp\Big(&\int_0^T g^2(T-t) \big(P(Y_t) s_\theta(Y_t,T-t)-\nabla_{\mathcal{M}} \log p(Y_t, T-t)\big)\cdot dW_t^\mathcal{M} \notag \\
& - \frac{1}{2}\int_0^T \big|P(Y_t)s_\theta(Y_t,T-t)-\nabla_{\mathcal{M}} \log p(Y_t, T-t)\big|^2 g^2(T-t)\,dt\Big)\,, \label{girsanov}
\end{align}
where $W_t^\mathcal{M}$ is a Brownian motion on $\mathcal{M}$ under $\overleftarrow{\mathbb{Q}}$. Therefore, we have
\begin{align}
H(\overleftarrow{\mathbb{Q}}\,|\,\mathbb{P}_\theta) = &
\mathbb{E}_{\overleftarrow{\mathbb{Q}}}\Big(\log \frac{d\overleftarrow{\mathbb{Q}}}{d\mathbb{P}_\theta}\Big) \notag \\
= & \mathbb{E}_{\overleftarrow{\mathbb{Q}}}\Big(
\frac{1}{2}\int_0^T \big|P(Y_t)s_\theta(Y_t,T-t)-\nabla_{\mathcal{M}} \log p(Y_t, T-t)\big|^2 g^2(T-t)\,dt
\Big) \notag \\
= & \mathbb{E}_{\mathbb{Q}}\Big(
\frac{1}{2}\int_0^T \big|P(X_t) s_\theta(X_t,t)-\nabla_{\mathcal{M}} \log p(X_t, t)\big|^2 g^2(t)\,dt
\Big)\,, \label{proof-corrolary-part-2}
\end{align}
where the second equality follows from the fact that the stochastic integration in \eqref{girsanov} vanishes after taking logarithm and  expectation, and the third equality follows by a change of variable $t\leftarrow T-t$ and the fact that $Y_t=X_{T-t}$. The conclusion is obtained after combining \eqref{proof-corrolary-part-1} and \eqref{proof-corrolary-part-2}. \hfill
\end{proof}

Finally, we present the technical lemma 
on the projection scheme in \eqref{projection}, which  was used in the proof of Theorem~\ref{thm-continuous-limit}.

\begin{lemma}
Given $x\in\mathcal{M}$ and $x'\in\mathbb{R}^n$, the solution to the problem
\begin{equation}
  y = x' + \nabla\xi(x) c(x'), \quad c(x')\in \mathbb{R}^{n-d}, ~~\mbox{s.t.}~~\xi(y) =  0 
  \label{projection-repeat}
\end{equation}
has the following two expansions as $x'$ approaches $x$ 
\begin{align}
  \partial_j c_\eta (x) &= - \sum_{\alpha=1}^{n-d}(\nabla\xi^\top \nabla\xi)^{-1}_{\eta\alpha}(x) \partial_j
  \xi_\alpha(x)\,, \quad 1 \le j \le n\,, \label{lagrange-multiplier-diff1} \\
\partial^2_{jl} c_\eta(x) &= 
  \sum_{\alpha=1}^{n-d}(\nabla\xi^\top\nabla\xi)^{-1}_{\eta\alpha}(x)
\sum_{r,r'=1}^{n}\Big(\partial_r\xi_\alpha \partial_{r'}
  P_{rj}\, P_{r'l} \Big)(x)\,, \quad 1 \le j,l \le n\,, \label{lagrange-multiplier-diff2}
\end{align}
for $1\le \eta \le n-d$. Moreover, as $x'$ approaches $x$, the following expansion of $y$ in \eqref{projection-repeat} holds
\begin{align}
y_i =& x_i + \sum_{j=1}^n P_{ij}(x) (x'_j - x_j) + \frac{1}{2}
\sum_{j,l=1}^n\bigg[\sum_{r,r'=1}^n \big((I-P)_{ir} P_{r'l}\partial_{r'}
P_{rj} \big)(x)\bigg] (x'_j -
x_j)(x'_{l} - x_{l}) \notag \\
& +  \frac{1}{6}
\sum_{j,l,r=1}^n
\bigg(
\sum_{\eta=1}^{n-d}
\partial_i \xi_\eta(x)\,
\partial^3_{jlr} c_\eta(x)
\bigg)   (x'_j - x_j)(x'_{l} - x_{l})(x'_{r} - x_{r}) + o(|x'-x|^3)\,, \label{constraint-expansion}
\end{align}
where $1\le i \le n$.
\label{lemma-expansion-of-projection}
\end{lemma}

\begin{proof}
Differentiating (with respect to $x'$) the constraint equation 
\begin{equation*}
\xi_\alpha(x' + \nabla\xi(x)c(x')) = 0\,, \quad \alpha = 1,\dots, n-d\,,
\end{equation*}
we get 
\begin{equation}
\sum_{r=1}^n
      \partial_r \xi_\alpha \big(x'+\nabla\xi(x)c(x')\big)
      \Big(\delta_{rj} + \sum_{\eta=1}^{n-d}
      \partial_r \xi_\eta(x)\partial_{j} c_\eta(x')\Big) = 0\,, \quad 1 \le j \le n\,.
      \label{1st-derivative-constraint-eqn}
  \end{equation}
Setting $x'=x$ in \eqref{1st-derivative-constraint-eqn} (notice that $c(x')=0$ when $x'=x$) and multiplying both
sides by $(\nabla\xi^\top \nabla\xi)^{-1}(x)$, we obtain \eqref{lagrange-multiplier-diff1}. In particular, using \eqref{exp-pij}, we have 
\begin{equation}
\delta_{rj} + \sum_{\eta=1}^{n-d}\partial_r \xi_\eta(x)\partial_j c_\eta(x) = 
    \delta_{rj} - \sum_{\eta,\alpha=1}^{n-d}\Big(\partial_r
    \xi_\eta (\nabla \xi^\top\nabla\xi)^{-1}_{\eta\alpha}
    \partial_j \xi_\alpha\Big)(x) = P_{rj}(x)\,, \quad 1\le r,j\le n\,.
    \label{1st-derivative-constraint-eqn-p}
  \end{equation}
  Next, we show \eqref{lagrange-multiplier-diff2}.  Differentiating \eqref{1st-derivative-constraint-eqn} again, setting $x'=x$
  and using \eqref{1st-derivative-constraint-eqn-p}, we get, for $1\le \alpha \le n-d$ and $1\le j,l\le n$,
    \begin{align*}
      0 =& \sum_{r,r'=1}^n \partial^2_{rr'} \xi_\alpha(x) \Big(\delta_{rj} + 
      \sum_{\eta=1}^{n-d} \partial_r \xi_\eta(x)\partial_j c_\eta(x)\Big) \Big(\delta_{r'l} +
\sum_{\eta=1}^{n-d} \partial_{r'} \xi_\eta(x)\partial_l
      c_\eta(x)\Big) \\
& + \sum_{r=1}^n \partial_r \xi_\alpha(x) \Big(\sum_{\eta=1}^{n-d} \partial_r \xi_\eta(x)\partial^2_{jl} c_\eta(x)\Big) \\
      = &\sum_{r,r'=1}^n \Big(\partial^2_{rr'} \xi_\alpha P_{rj} P_{r'l}\Big)(x)
      + \sum_{\eta=1}^{n-d}\Big((\nabla \xi^\top \nabla\xi)_{\alpha\eta}
      \partial^2_{jl} c_\eta\Big)(x)\,,
    \end{align*}
from which we can solve, for $1\le \eta \le n-d$ and $1\le j,l\le n$, 
\begin{align}
\partial^2_{jl} c_\eta(x) = &
- \sum_{\alpha=1}^{n-d} \sum_{r,r'=1}^{n} \Big((\nabla\xi^\top\nabla\xi)^{-1}_{\eta\alpha} \partial^2_{rr'} \xi_\alpha P_{rj}P_{r'l}\Big)(x) \notag \\
= & \sum_{\alpha=1}^{n-d} \sum_{r,r'=1}^{n} \Big((\nabla\xi^\top\nabla\xi)^{-1}_{\eta\alpha}\, \partial_r \xi_\alpha \partial_{r'} P_{rj}\,
P_{r'l}\Big)(x)
\,, \notag 
\end{align}
where the second equality follows from the product rule and the identity
$\sum_{r=1}^{n} P_{rj} \partial_r \xi_{\alpha}=0$.
This shows \eqref{lagrange-multiplier-diff2}.

Lastly, we prove the expansion in \eqref{constraint-expansion}. Note that \eqref{lagrange-multiplier-diff2} and \eqref{exp-pij} implies 
\begin{equation}
\sum_{\eta=1}^{n-d} (\partial_i \xi_\eta\partial^2_{jl} c_{\eta})(x)  = 
\sum_{r,r'=1}^{n} \big((I-P)_{ir} P_{r'l}\,\partial_{r'} P_{rj}\big)(x) \,, \quad 1 \le i,j,l \le n\,.
\label{2nd-derivative-constraint-eqn-p}
  \end{equation}
  By expanding $c(x')$ at $x'=x$ to the third order, noticing that $c(x)=0$,
  and using \eqref{1st-derivative-constraint-eqn-p} and \eqref{2nd-derivative-constraint-eqn-p}
  for the first and second order derivatives respectively,
  we can derive
    \begin{align*}
      y_i =& x'_i + \sum_{\eta=1}^{n-d} \partial_i \xi_\eta(x) c_\eta(x') \\
      =& x_i + (x'_i-x_i) + \sum_{\eta=1}^{n-d} \partial_i \xi_\eta(x)
      \bigg[\sum_{j=1}^{n} \partial_j c_{\eta}(x) (x'_j - x_j) +
    \frac{1}{2} \sum_{j,l=1}^{n} \partial^2_{jl} c_{\eta}(x) (x'_j -
      x_j)(x'_{l} - x_{l})  \\
      & + \frac{1}{6}\sum_{j,l,r=1}^{n}  \partial^3_{jlr} c_\eta(x)\, (x'_j -
      x_j)(x'_{l} - x_{l})(x'_{r} - x_{r})\bigg] + o(|x'-x|^3) \\
      =& x_i + \sum_{j=1}^{n} P_{ij}(x) (x'_j - x_j) + \frac{1}{2}
      \sum_{j,l=1}^{n} \bigg[\sum_{r,r'=1}^{n} \Big((I-P)_{ir} \partial_{r'}
      P_{rj} P_{r'l}\Big)(x)\bigg] (x'_j - x_j)(x'_{l} - x_{l}) \\
      & +  \frac{1}{6} \sum_{j,l,r=1}^{n} 
      \bigg(\sum_{\eta=1}^{n-d} 
      \partial_i \xi_\eta(x)
      \partial^3_{jlr} c_\eta(x)\bigg) (x'_j -
      x_j)(x'_{l} - x_{l})(x'_{r} - x_{r}) + o(|x'-x|^3)\,,
    \end{align*}
  which proves \eqref{constraint-expansion}.  \hfill
\end{proof}

\section{Details of algorithms and experiments}\label{Exp_details}

\subsection{Neural networks and training setup.}
As described in Theorem \ref{thm-continuous-limit}, the functions $(s^{(k+1),\theta}(x))_{0\le k\le N-1}$ are represented by a single function $s_{\theta}(x, t)$ with parameter $\theta$, which is in turn modeled by a multilayer perceptron (MLP). We employ SiLU as the activation function.  We do not require that the output of the neural network belongs to the tangent space, thanks to the presence of the projection in both the forward and the reverse processes. Alternative strategies for designing neural networks with outputs in tangent space are proposed in \cite{de2022riemannian}.

Moreover, we define $g(t)$ as $g(t) = \gamma_{\mathrm{min}} + \frac{t}{T}(\gamma_{\mathrm{max}} - \gamma_{\mathrm{min}})$, where $\gamma_{\mathrm{max}} \geq \gamma_{\mathrm{min}} > 0$. The parameters in the Markov chain are chosen as $\sigma_k = \beta_{k+1} = \sqrt{h}g(kh)$, with $h = \frac{T}{N}$ and $k = 0, 1, \dots, N-1$.

We train our models using PyTorch, where we employ the Adam optimizer with fixed learning rate $r=5\times 10^{-4}$ and we clip the gradients of the parameters when the $2$-norm exceeds $10$. We also implement an exponential moving average for the model weights \cite{polyak1992acceleration} with a decay rate of $0.999$. All experiments are run on a single NVIDIA A40 GPU with 48G memory. In each run, the dataset is divided into training, validation, and test sets with ratio $80$:$10$:$10$. Values of all the parameters in our experiments are summarized in Table~\ref{Hyperparameters}.

\begin{table}[htbp]
\centering
\begin{tabular}{lccccccccc}
\toprule
Datasets & $\gamma_{\mathrm{min}}$ & $\gamma_{\mathrm{max}} $& $N$ & $T$ &$l_{\mathrm{f}}$ & $N_{\mathrm{epoch}}$& $B$  & $N_{\mathrm{n}}$& $N_{\mathrm{l}}$ \\
\midrule
Bunny, $k=50$& $0.07$& $0.07$& $800$& $8.0$&$100$ & $2000$ &$2048$  & $256$ & $5$ \\ 
Bunny, $k=100$& $0.07$ & $0.07$& $500$&$5.0$ & $100$&  $2000$ &$2048$   & $256$ & $5$  \\ 
Spot, $k=50$ & $0.1$&$0.1$ & $500$ & $5.0$ &$100$ &  $2000$ &$2048$  & $256$ & $5$   \\ 
Spot, $k=100$ & $0.1$& $0.1$& $300$& $3.0$ &$100$ &  $2000$ &$2048$   & $256$ & $5$  \\ 
\midrule
$\mathrm{SO}(10)$ & $0.2$ & $2.0$& $500$& $1.0$ & $100$ &$2000$ & $512$ & $512$ & $3$ \\ 
Transformed-$\mathrm{SO}(10)$ & $0.2$ & $1.3$ & $250$ & $2.5$ & $100$ & $1000$ & $512$ & $256$ & $3$\\
\midrule
Hamiltonian & $0.1$ & $1.5$ & $150$ & $1.5$ & $1$ & $4000$ & $512$ & $256$ & $3$  \\
\midrule
Alanine dipeptide & $1.0$ & $1.0$ & $200$&$0.1$ & $100$& $5000$ & $512$ & $512$ & $5$ \\ 
\bottomrule
\end{tabular}
\caption{Parameters in our experiments. $\gamma_{\mathrm{min}},\gamma_{\mathrm{max}},N,T$ are the parameters in our model; $l_{\mathrm{f}},N_{\mathrm{epoch}},B$ are the parameters in Algorithm \ref{algo_training}; $N_{\mathrm{n}},N_{\mathrm{l}}$ are the numbers of the hidden nodes per layer and the hidden layers of the neural networks, respectively.}\label{Hyperparameters}
\end{table}

\subsection{Mesh data on learned manifolds.}\label{Bunny_Spot_details}
The function $\xi:\mathbb{R}^3\rightarrow\mathbb{R}$ is modeled by a MLP with $3$ hidden layers, each of which has $128$ nodes. Different from the activation function in our model, here we use Softplus activation function, where the parameter $\beta$ is set to $10$. The loss function for learning $\xi$ is 
\begin{equation}\label{sdf_loss}
\ell(\xi)=\frac{1}{|\mathcal{D}|} \sum_{x\in \mathcal{D}}|\xi(x)|+ \frac{\lambda}{|\mathcal{D}'|}\sum_{y \in \mathcal{D}'}(|\nabla \xi(y)|-1)^2,
\end{equation}
where $\lambda = 0.1$, $\mathcal{D}$ denotes the set of vertices of a high-resolution mesh, and the set $\mathcal{D}'$ contains samples near the manifolds that are obtained by perturbing samples $x\in \mathcal{D}$ according to $y=x+c\epsilon$, with $\epsilon\sim \mathcal{N}(0,~I_3)$ and $c=0.05$.
The first term in \eqref{sdf_loss} imposes that $\xi$ is close to zero on vertices, whereas the second term serves as a regularization term and ensures that $\xi$ has non-vanishing gradient near the manifold. 
The neural network is trained for $200000$ steps using Adam optimizer, with batch size $512$ and learning rate $10^{-4}$. 

With the learned function $\xi$, we consider the manifold defined by 
$\mathcal{M}=\{x\in \mathbb{R}^3 | \xi(x)=0\}$. The values of $\xi$ on the dataset are at the order $10^{-2}$. To ensure that the data is on $\mathcal{M}$ with high precision,  we refine the dataset by solving the following ordinary differential equation (ODE):
\begin{equation}\label{refine_ode}
\frac{d x_t}{dt}=-\xi(x_t) \nabla\xi(x_t),  \quad t\ge 0\,,
\end{equation}
starting from each point in the dataset until the condition $|\xi(x_t)|<10^{-5}$ is reached (notice that 
\eqref{refine_ode} is a gradient flow and $\lim_{t\rightarrow \infty} |\xi(x_t)|=0$). This ensures that the refined points conform to the manifold accurately.

\subsection{High-dimensional special orthogonal group.}\label{SOn_details}
The dataset is constructed as a mixture of $5$ wrapped normal distributions, each of which is the image (under the exponential map) of a normal distribution in the tangent space of a center $S_i\in \mathrm{SO}(10)$, $1\le i\le 5$. 
To ensure multimodality, we define the centers $S_i$ as follows. We initially define a $2\times 2$ matrix $A_0 := \begin{bmatrix}\cos\frac{\pi}{3} & \sin\frac{\pi}{3} \\-\sin\frac{\pi}{3} & \cos\frac{\pi}{3}\end{bmatrix}$, which represents a rotation by $\frac{\pi}{3}$ radians. We then construct block diagonal matrices of order $10$ by incorporating $A_0$ and the identity matrix $I_2$ in various combinations:
\begin{align}
&X_1=\operatorname{diag}\{A_0, I_2, I_2, I_2, I_2\},  ~~ 
X_2 =\operatorname{diag}\{A_0, A_0, I_2, I_2, I_2\}, ~~  
X_3=\operatorname{diag}\{A_0, A_0, A_0, I_2, I_2\}, \notag \\
&\hspace{2cm} X_4=\operatorname{diag}\{A_0, A_0, A_0, A_0, I_2\},  ~~ 
X_5=\operatorname{diag}\{A_0, A_0, A_0, A_0, A_0\}. \label{SO10_center}
\end{align}

The centers $S_i$ of the wrapped normal distributions are chosen as $S_i=Q_i^\top X_i Q_i$, where $Q_i \in \mathrm{SO}(10)$ are randomly drawn from the uniform distribution. According to \eqref{SO10_center},  the statistics $\eta(S)=(\operatorname{tr}(S), \operatorname{tr}(S^2), \operatorname{tr}(S^4), \operatorname{tr}(S^5))$ of the centers can be explicitly computed (using the trace identities $\operatorname{tr}(AB)=\operatorname{tr}(BA)$ and $\operatorname{tr}(Q_i^\top X_iQ_i)=\operatorname{tr}(X_i)$) as
\begin{align}
& \eta(S_1)=(9,7,7,9), \quad
\eta(S_2)=(8,4,4,8),  \quad
\eta(S_3)=(7,1,1,7),  \notag \\
& \hspace{1cm} \eta(S_4)=(6,-2,-2,6), \quad
\eta(S_5)=(5,-5,-5,5). \label{SO10_center_eta}
\end{align}

To generate data in the dataset, we select a center $S_i$ with equal probability, sample tangent vectors $Y$ from the normal distribution (in the tangent space at $S_i$) with zero mean and standard deviation $0.05$, and then compute their images $S$ under the exponential map, that is, $S=S_i\mathrm{e}^{S_i^\top Y}$.

\subsection{Alanine dipeptide.}\label{subsec-dipeptide_details}
To generate the dataset, we initially perform a constrained molecular simulation of alanine dipeptide in water for $1$ns using the molecular dynamics package GROMACS~\cite{GROMACS} with step size $1$fs. We apply the harmonic biasing method in COLVARS module~\cite{Fiorin2013}, where the collective variable is chosen as the dihedral angle $\phi$ and the harmonic potential is centered at $\phi=-70^\circ$ with the force constant $5.0$. Further simulation details are omitted since they are similar to those in~\cite{ae2024}. In total, $10^4$ configurations are obtained by recording every $100$ simulation steps. We exclude the hydrogen atoms and work with the coordinates of the $10$ non-hydrogen atoms in the system (see Figure~\ref{dipeptide_system}). In a final preparatory step, we apply the refinement technique in Appendix~\ref{Bunny_Spot_details} (see Eq.~\eqref{refine_ode}) to the recorded coordinates, so that the data in the dataset lives in the manifold $\mathcal{M}=\{x\in \mathbb{R}^{30}| \phi(x)=-70^\circ\}$ up to a small numerical error of order $10^{-5}$. 

Since $\mathcal{M}$ is unbounded, we 
adopt a nonzero function $b$ in our model to make sure that the Markov chain processes stay in bounded region. To this end, we choose a reference configuration
$x^\mathrm{ref}$ from the dataset and define the potential function 
\begin{equation}\label{potential-v}
V(x) = \frac{\kappa}{2} |R_x^*(x-w^*_x) - x^{\mathrm{ref}}|^2\,, \quad x\in \mathbb{R}^{30}\,,
\end{equation}
with $\kappa = 50$, where $R_x^*$, $w^*_x$ are the optimal rotation and the optimal translation that minimize the RMSD (see Eq.~\eqref{rmsd}). The function $b$ is defined as (the negative gradient of $V$ in full space)
\begin{equation}
b=-\nabla V(x) = -\kappa \left(R_x^*(x-w^*_x) - x^{\mathrm{ref}}\right),
\end{equation}
where the second equality follows by differentiating $V$ in \eqref{potential-v} and using the first order optimality equations satisfied by $R_x^*$ and $w^*_x$~(also see \cite{rmsd_quaternions}).

We also build our model to make sure that the generated distribution is $\mathrm{SE}(3)$-invariant (i.e. invariant under rotations and translations). For this, we rely on the theoretical results in \cite{xu2022geodiff} and in Appendix~\ref{Equivariant_NN}. One can check that $V(x)$ is $\mathrm{SE}(3)$-invariant and $b$ satisfies property \eqref{SE3_sym} in Appendix~\ref{Equivariant_NN}, that is, $b$ is equivariant under rotations and invariant under translations. This guarantees  that the prior distribution $p(x^{(N)})$, which we choose as the invariant distribution of the forward process, is $\mathrm{SE}(3)$-invariant as well. We still need to make sure that the transition densities of the reverse  Markov chain are $\mathrm{SE}(3)$-invariant. For this purpose, in the reverse process we set $s^{(k+1),\theta}(x) = (R^*_x)^\top f_\theta(R^*_x(x-w^*_x),\frac{(k+1)T}{N})$, where $f_{\theta}:\mathbb{R}^{30}\times \mathbb{R}\rightarrow\mathbb{R}^{30}$ is modeled by a single MLP with parameter $\theta$, and both $R^*_x$ and $b^*_x$ are computed by the Kabsch algorithm \cite{kabsch1976solution}. With this choice, $s^{(k+1),\theta}(x)$ satisfies property \eqref{SE3_sym} by Proposition \ref{equiv_nn} in Appendix~\ref{Equivariant_NN}, and the transition density of the reverse process is  $\mathrm{SE}(3)$-invariant by Proposition \ref{equiv_transition} in Appendix~\ref{Equivariant_NN}. Since the prior $p(x^{(N)})$ is also $\mathrm{SE}(3)$-invariant, we conclude that the learned distribution $p_{\theta}(x^{(0)})$ is $\mathrm{SE}(3)$-invariant~\cite{xu2022geodiff}. Compared to the commonly used equivariant networks~\cite{satorras2021n}, our network fits our experiment better thanks to its lower computational cost and reduced memory usage.

\subsection{Additional experimental results.}
This section presents additional experimental results, focusing on computation time and the non-convergence rate of trajectories. We first report the simulation time $T_{\mathrm{sim}}$ and training time $T_{\mathrm{train}}$, with the percentages of the total runtime $T_{\mathrm{total}}$ indicated in parentheses in Table \ref{runtime}. The trajectory update interval $l_{\mathrm{f}}$, the number of steps $N$, and the cost of solving Newton's equation jointly determine the simulation time $T_{\mathrm{sim}}$. Recall that the total complexity of Newton’s method is $\mathcal{O}(k_{\mathrm{iter}}(C_{\xi} +(n-d)^3))$, where $C_{\xi}$ denotes the computational cost of evaluating $\nabla \xi$. In our experiments, $C_{\xi}=\mathcal{O}(1)$ for most datasets, except for meshes where neural network evaluation is required. The codimension is typically $n-d = 1$, except for $\mathrm{SO}(10)$, where $n-d = 55$. When Newton's method converges, it requires at most $k_{\mathrm{iter}} = 3$ iterations.

Moreover, as discussed in Section \ref{subsec-projection-scheme}, \eqref{projection} may not yield solutions for certain vectors $v$, leading to the discarding of corresponding trajectories. Table~\ref{Newton} provides the failure rate of trajectory generation in our experiments, as well as the maximum number of iterations required for Newton's method in cases where it converges.

\begin{table}[ht]
\centering
\begin{tabular}{lccccccc}
\toprule
Datasets   & $l_{\mathrm{f}}$ & $N_{\mathrm{epoch}}$ & $T_{\mathrm{sim}}$ & $T_{\mathrm{train}}$  & $T_{\mathrm{total}}$  & $T_{\mathrm{epoch}}$ \\
\midrule
Bunny, $k=50$ & 100 & 2000 & 252(1.8\%) & 14120& 14372  &  7.06\\ 
Bunny, $k=100$ & 100 & 2000 & 142(1.6\%) & ~8718 & ~8860  & 4.36\\ 
Spot, $k=50$ & 100 & 2000 & 113(1.3\%)& ~8751  & ~8864 & 4.38\\ 
Spot, $k=100$ & 100& 2000 & ~70(1.3\%)& ~5189 & ~5259  & 2.59\\ 
\midrule
$\mathrm{SO}(10)$ & 100 & 2000 & 2426(11.2\%) & 19289 & 21715 & 9.64\\ 
Transformed-$\mathrm{SO}(10)$ & 100 & 2000 & 491(2.2\%) & 22276 & 22767 & 22.28 \\
\midrule
Hamiltonian & 1 & 4000 & 1938(79.6\%) & 496 & 2434 & 0.12\\
\midrule
Alanine dipeptide & 100 & 5000 & 159(1.1\%)& 14299 & 14458  & 2.86\\
\bottomrule
\end{tabular}
\caption{Detailed runtime metrics in our experiments. We report $T_{\mathrm{sim}}$, $T_{\mathrm{train}}$, and $T_{\mathrm{total}}$ as the time for path generation, time for training, and total runtime, respectively, with the percentages of the total runtime $T_{\mathrm{total}}$ shown in parentheses. The parameter $l_{\mathrm{f}}$ determines the frequency of trajectory updates. The final column $T_{\mathrm{epoch}}$ shows the training time per epoch, calculated as $T_{\mathrm{train}}/N_{\mathrm{epoch}}$. All time metrics are reported in seconds.}\label{runtime}
\end{table}

\begin{table}[htbp]
\centering
\begin{tabular}{lccccc}
\toprule
Datasets & $\sigma_{\max}$& $R_{\mathrm{fail\_fwd}}$ & $R_{\mathrm{fail\_bwd}}$ & $\mathrm{iter}_{\max}$ & $\mathrm{tol}$\\
\midrule
Bunny, $k=50$& 0.007 & 1.00\% &  0.82\% & 3 & 1e-4 \\ 
Bunny, $k=100$& 0.007 & 0.65\% & 0.55\% & 3& 1e-4 \\ 
Spot, $k=50$& 0.010 & 0.15\% & 0.25\% & 3 & 1e-4 \\ 
Spot, $k=100$& 0.010& 0.11\% & 0.10\% & 3 & 1e-4 \\ 
\midrule
$\mathrm{SO}(10)$ & 0.089 & 0.00\% & 0.00\%  & 3 & 1e-6\\ 
Transformed-$\mathrm{SO}(10)$ &  0.13 & 0.00\% & 0.00\% & 3 & 5e-6 \\
\midrule
Hamiltonian & 0.15 & 0.00\% & 0.00\% & 4 & 1e-5\\
\midrule
Alanine dipeptide & 0.022 & 0.01\% & 0.00\%  & 2& 1e-5 \\
\bottomrule
\end{tabular}
\caption{Failure rate of trajectory generation. $R_{\mathrm{fail\_fwd}}$ and $R_{\mathrm{fail\_bwd}}$ represent the percentages of discarded trajectories when sampling the forward and reverse process, respectively. $\sigma_{\max}$ denotes the maximum value of $(\sigma_k)_{0\leq k \leq N-1}$. $\mathrm{iter}_{\max}$ denotes the maximum number of iterations for Newton's method to converge (i.e. until the error is less than $\mathrm{tol}$).}\label{Newton}
\end{table}

\section{Theoretical results on neural networks for molecular systems}\label{Equivariant_NN}

In this section, we present theoretical results for the neural network architecture we employed in studying alanine dipeptide. 

Assume that the system consists of $M$ atoms, where the coordinates of the $i$-th atom are denoted by $\boldsymbol{x}_i\in \mathbb{R}^{3}$, for $i=1,2,\dots, M$. Let $x\in \mathbb{R}^{3M}$ be the vector consisting of all the coordinates $\boldsymbol{x}_1\,\boldsymbol{x}_2,\dots, \boldsymbol{x}_M\in \mathbb{R}^{3}$. For simplicity, given a rotation matrix $R\in \mathrm{SO}(3)$ and a translation vector $w\in\mathbb{R}^3$, we use the conventional notation $Rx+w$ to denote the vector in $\mathbb{R}^{3M}$ that consists of the transformed coordinates $R\boldsymbol{x}_1+w, R\boldsymbol{x}_2+w, \dots, R\boldsymbol{x}_M+w\in \mathbb{R}^3$. We say that a function $f$ defined in $\mathbb{R}^{3M}$ is $\mathrm{SE}(3)$-invariant, if $f(Rx+w) = f(x)$, for all  $R\in \mathrm{SO}(3)$, $w\in\mathbb{R}^3$, and for all $x\in \mathbb{R}^{3M}$. We say that function $f:\mathbb{R}^{3M}\rightarrow \mathbb{R}^{3M}$ possesses property \eqref{SE3_sym}, if it is both equivariant under rotations and invariant under translations, i.e.\ 
\begin{equation}\label{SE3_sym}
    f(Rx+w) = Rf(x),\mbox{ for all }R\in \mathrm{SO}(3), w\in\mathbb{R}^3, \mbox{ and all }x\in \mathbb{R}^{3M}.
\end{equation}
Assume that a configuration $x^\textrm{ref}$ is chosen as reference. Given $x$, the optimal rotation matrix and the optimal translation vector, which minimize the RMSD 
\begin{equation}
\mathrm{RMSD}(x;x^\textrm{ref}) = \Big(\frac{1}{M}|R (x - w) - x^\textrm{ref}|^2\Big)^{\frac{1}{2}}
\label{rmsd}
\end{equation}
from the reference $x^\textrm{ref}$, are denoted by $R^*_x$ and $w^*_x$, respectively.

Proposition \ref{equiv_nn} characterizes functions that are both equivariant under rotations and invariant under translations. Proposition \ref{equiv_transition} guarantees the $\mathrm{SE}(3)$-invariance of the transition densities of our diffusion model.

\begin{proposition}\label{equiv_nn}
The following two claims are equivalent.
\begin{itemize}
\item
Function $s:\mathbb{R}^{3M}\rightarrow\mathbb{R}^{3M}$  possesses property \eqref{SE3_sym}. 
\item
There is a function $f:\mathbb{R}^{3M}\rightarrow\mathbb{R}^{3M}$, such that $s(x) = (R^*_x)^\top f(R^*_x(x-w^*_x))$, for all $x\in \mathbb{R}^{3M}$.
\end{itemize}
\label{prop-equivariant}
\end{proposition}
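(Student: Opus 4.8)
The plan is to prove the two implications of the equivalence separately; the only substantive ingredient is a preliminary observation about how the optimal alignment $(R^*_x,w^*_x)$ of \eqref{rmsd} transforms under the action of $\mathrm{SE}(3)$. First I would show that for every $R\in\mathrm{SO}(3)$ and $w\in\mathbb{R}^3$ one has $R^*_{Rx+w}=R^*_x R^\top$ and $w^*_{Rx+w}=R\,w^*_x+w$. This comes from a change of variables in the minimization: writing a candidate translation for the configuration $Rx+w$ in the form $Rw'+w$ turns $|\tilde R\big((Rx+w)-(Rw'+w)\big)-x^{\mathrm{ref}}|^2$ into $|(\tilde R R)(x-w')-x^{\mathrm{ref}}|^2$, and since $(\tilde R,\tilde w)\mapsto(\tilde R R,\,w')$ is a bijection of $\mathrm{SO}(3)\times\mathbb{R}^3$ onto itself, the two RMSD problems have matching minimizers, which yields the claimed identities.

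Given this, the implication from the second claim to the first is a direct substitution: I would replace $x$ by $Rx+w$ in $s(x)=(R^*_x)^\top f\big(R^*_x(x-w^*_x)\big)$, insert the transformation rule for $R^*$ and $w^*$, and use $R^\top R=I$ to collapse the argument of $f$ back to $R^*_x(x-w^*_x)$ while pulling out a factor $R$ in front, arriving at $s(Rx+w)=R\,s(x)$, i.e.\ property~(\ref{SE3_sym}). For the converse, from property~(\ref{SE3_sym}) to the representation, I would simply take $f:=s$: for any $x$ the vector $R^*_x(x-w^*_x)=R^*_x x+\big(-R^*_x w^*_x\big)$ is of the form $Rx+w$ with $R=R^*_x$ and $w=-R^*_x w^*_x$, so property~(\ref{SE3_sym}) gives $s\big(R^*_x(x-w^*_x)\big)=R^*_x\,s(x)$, and multiplying by $(R^*_x)^\top$ recovers $s(x)=(R^*_x)^\top f\big(R^*_x(x-w^*_x)\big)$ exactly. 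Note that this direction uses only orthogonality of $R^*_x$ and not the transformation rule.

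The main obstacle is not any of these manipulations but the well-definedness and equivariance of the alignment map $x\mapsto(R^*_x,w^*_x)$: the RMSD minimizer need not be unique for every configuration (e.g.\ for highly symmetric ones), so a priori the clean transformation law above could fail. I would handle this either by restricting attention to configurations in general position, where the minimizer is unique, or — matching how the method is actually implemented — by noting that the Kabsch/SVD construction of $(R^*_x,w^*_x)$ is a fixed deterministic procedure that is itself $\mathrm{SE}(3)$-equivariant, so the identities hold as stated. Everything else reduces to the definition of property~(\ref{SE3_sym}) and the orthogonality of $R^*_x$.
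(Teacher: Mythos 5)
Your proposal is correct and takes essentially the same route as the paper's proof: the equivariance law $R^*_{Rx+w}=R^*_xR^\top$, $w^*_{Rx+w}=Rw^*_x+w$ drives the implication from the representation to property~(\ref{SE3_sym}), and the converse is the same substitution $R=R^*_x$, $w=-R^*_xw^*_x$ with the choice $f=s$. Your extra caveat about non-uniqueness of the RMSD minimizer is a reasonable point that the paper leaves implicit (it tacitly assumes a deterministic, equivariant choice such as the Kabsch construction), but it does not change the argument.
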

\begin{proof}
It is straightforward to verify that the first claim implies the second claim. In fact, setting $R=R_x^*$, $w=-R^*_xw_x^*$, and using the identity $R^\top R=I_3$, we obtain from the first claim that $s(x)=(R_x^*)^\top s(R^*_x(x-w^*_x))$. Hence, the second claim holds with $f=s$. To show that the second claim also implies the first one, we use the fact that the optimal rotation $R_{Rx+w}^*$ and the optimal translation $w_{Rx+w}^*$, which minimize the RMSD of the state $Rx+w$ from the reference $x^{\mathrm{ref}}$, are given by $R_{Rx+w}^*=R_{x}^*R^\top$ and $w_{Rx+w}^*=Rw^*_x+w$, respectively. This fact can be directly checked using \eqref{rmsd}. In particular, we have 
\begin{equation*}
R^*_{Rx+w} (Rx+w - w^*_{Rx+w}) =
R^*_x(x - w^*_x)\,.
\end{equation*}
Therefore, for the function $s$ defined in the second claim, we can compute, for any $R\in \mathrm{SO}(3)$, $w\in\mathbb{R}^3$, and any $x\in \mathbb{R}^{3M}$,
\begin{align*}
s(Rx + w)=&(R^*_{Rx+w})^\top f\big(R^*_{Rx+w}(Rx+w - w^*_{Rx+w})\big) \\
=& R (R^*_x)^\top f(R^*_x (x -w^*_x))\\
=& R s(x)\,,
\end{align*}
which shows the first claim. \hfill
\end{proof}

\begin{proposition}\label{equiv_transition}
Assume that $\xi$ is $\mathrm{SE}(3)$-invariant and $b$ possesses property \eqref{SE3_sym}. Then, the transition density \eqref{trans-forward-projection} of the forward process is $\mathrm{SE}(3)$-invariant. Further assume that the function $s^{(k+1),\theta}$ possesses property \eqref{SE3_sym} for $0 \le k\le N-1$. Then, the transition density \eqref{trans-backward-projection} of the reverse process is also $\mathrm{SE}(3)$-invariant.
\end{proposition}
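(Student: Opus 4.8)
The plan is to verify that the transition densities $q(x^{(k+1)}\,|\,x^{(k)})$ and $p_\theta(x^{(k)}\,|\,x^{(k+1)})$ from Eq.~\eqref{joint-forward-projection} and Eq.~\eqref{joint-backward-projection} are left unchanged when the pair of arguments is replaced by $(Rx^{(k+1)}+w,\,Rx^{(k)}+w)$, i.e.\ when every atomic coordinate is rotated by $R\in\mathrm{SO}(3)$ and translated by $w\in\mathbb{R}^3$ (writing $R$ also for the block-diagonal matrix $\operatorname{diag}(R,\dots,R)\in\mathbb{R}^{3M\times 3M}$; note that $x\mapsto Rx+w$ is an isometry of $\mathcal{M}$, so $\sigma_{\mathcal{M}}$ is preserved and this is indeed the right notion of invariance). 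It suffices to check invariance factor by factor, and the cornerstone is the \emph{covariance of the geometric data}. First I would differentiate the $\mathrm{SE}(3)$-invariance $\xi(Rx+w)=\xi(x)$ to obtain $\nabla\xi(Rx+w)=R\,\nabla\xi(x)$; substituting this into the explicit formula $P(x)=I-\nabla\xi(x)(\nabla\xi(x)^\top\nabla\xi(x))^{-1}\nabla\xi(x)^\top$ and using $R^\top R=I$ yields $P(Rx+w)=R\,P(x)\,R^\top$. In particular $R U_x$ is an orthonormal basis of $T_{Rx+w}\mathcal{M}$, so $|\det(U_{Rx+w}^\top U_{Ry+w})|=|\det(U_x^\top U_y)|$, this quantity being independent of the choice of orthonormal bases.

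Next I would treat the exponential factor. Property (\ref{SE3_sym}) for $b$ reads $b(Rx+w)=R\,b(x)$, so using $P(Rx+w)=R\,P(x)\,R^\top$ and the linearity of $R$ a one-line computation gives
\[
G_{Rx+w}^{(\sigma)}(Ry+w;b)=\tfrac{1}{\sigma}\,R\,P(x)\,R^\top\bigl(R(y-x)-\sigma^2 R\,b(x)\bigr)=R\,G_x^{(\sigma)}(y;b),
\]
whence $|G_{Rx+w}^{(\sigma)}(Ry+w;b)|^2=|G_x^{(\sigma)}(y;b)|^2$ because $R$ is orthogonal. Together with the trivial invariance of the constant prefactor $(2\pi\sigma_k^2)^{-d/2}$, the only remaining factor is the normalization $(1-\epsilon_{x^{(k)}}^{(\sigma_k)})^{-1}$.

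The term $\epsilon_x^{(\sigma)}$ is the part that needs care, since it is defined through the behavior of the deterministic Newton solver rather than by a closed form. I would argue that running Newton's method from initialization $c=0$ on the constraint equation at $Rx+w$ with tangent vector $Rv$ produces \emph{exactly} the same iterates $c_0=0,c_1,c_2,\dots$ as running it at $x$ with tangent vector $v$: each Newton step only evaluates $\xi(\cdot)$ and $\nabla\xi(\cdot)^\top\nabla\xi(\cdot)$ at points of the form $R(\,\cdot\,)+w$, and both are invariant under the action (the second because the $R$-factors cancel, $\nabla\xi(Rz+w)^\top\nabla\xi(Rx+w)=\nabla\xi(z)^\top\nabla\xi(x)$). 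Hence the solver succeeds at $(Rx+w,Rv)$ iff it succeeds at $(x,v)$, i.e.\ $\mathcal{F}_{Rx+w}^{(\sigma)}=R\,\mathcal{F}_{x}^{(\sigma)}$; and since $R$ restricts to a linear isometry $T_x\mathcal{M}\to T_{Rx+w}\mathcal{M}$, the standard Gaussian on $T_x\mathcal{M}$ pushes forward to the standard Gaussian on $T_{Rx+w}\mathcal{M}$, so $\epsilon_{Rx+w}^{(\sigma)}=\mathbb{P}(Rv\notin\mathcal{F}_{Rx+w}^{(\sigma)})=\mathbb{P}(v\notin\mathcal{F}_{x}^{(\sigma)})=\epsilon_x^{(\sigma)}$. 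Combining these four observations establishes that $q(x^{(k+1)}\,|\,x^{(k)})$ is $\mathrm{SE}(3)$-invariant.

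For the reverse transition density the argument is identical with $b$ replaced by $s^{(k+1),\theta}-b$: property (\ref{SE3_sym}) is preserved under differences, so $s^{(k+1),\theta}-b$ is rotation-equivariant and translation-invariant, giving $G_{Rx+w}^{(\beta_{k+1})}(Ry+w;\,s^{(k+1),\theta}-b)=R\,G_x^{(\beta_{k+1})}(y;\,s^{(k+1),\theta}-b)$ and, by the same Newton-solver reasoning applied to Eq.~\eqref{projection-kstep-backward}, $\epsilon_{Rx+w,\theta}^{(\beta_{k+1})}=\epsilon_{x,\theta}^{(\beta_{k+1})}$; the determinant factor and the prefactor are handled exactly as before. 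The main obstacle is the bookkeeping around $\epsilon_x^{(\sigma)}$: one must be careful that it is the full \emph{trajectory} of the deterministic solver — not merely the solution set of the constraint equation — that is equivariant, and that the Gaussian law on the tangent space transforms correctly under the isometry $R|_{T_x\mathcal{M}}$; everything else reduces to the short covariance identities $\nabla\xi(Rx+w)=R\nabla\xi(x)$ and $P(Rx+w)=RP(x)R^\top$.
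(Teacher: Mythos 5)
Your proof is correct and follows essentially the same route as the paper's: derive $\nabla\xi(Rx+w)=R\nabla\xi(x)$, hence $P(Rx+w)=RP(x)R^\top$ and $U_{Rx+w}=RU_x$, and check the transition density factor by factor, treating the reverse process by replacing $b$ with $s^{(k+1),\theta}-b$. The only difference is that you spell out the equivariance of the Newton iterates and the pushforward of the tangent-space Gaussian to justify $\epsilon_{Rx+w}^{(\sigma)}=\epsilon_x^{(\sigma)}$, a point the paper simply asserts, which is a welcome extra detail rather than a different argument.
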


\begin{proof}
We consider the transition density \eqref{trans-forward-projection}. Recall that $U_x\in \mathbb{R}^{n\times d}$ is a matrix whose columns form an orthonormal basis of $T_x\mathcal{M}$. Since $\xi$ is $\mathrm{SE}(3)$-invariant, we have $\xi(Rx+w)=\xi(x)$, for all rotations $R$ and translation vectors $w$, which implies that $Rx+w\in \mathcal{M}$, if and only if $x \in\mathcal{M}$. Differentiating the identity $\xi(Rx+w)=\xi(x)$ with respect to $x$, we obtain the relation $\nabla\xi(Rx+w) = R\nabla\xi(x)$, from which we see that $U_{Rx+w}$ can be chosen such that  $U_{Rx+w}=RU_{x}$. For the orthogonal projection matrix $P$ in \eqref{exp-pij}, using the identity $R^\top R = I_3$, we can compute 
\begin{align*}
P(Rx+w)=&I_n - \nabla\xi(Rx+w) \big(\nabla\xi(Rx+w)^\top\nabla\xi(Rx+w)\big)^{-1} \nabla\xi(Rx+w)^\top \\
=& I_n - R\nabla\xi(x) \big(\nabla\xi(x)^\top\nabla\xi(x)\big)^{-1} \nabla\xi(x)^\top R^\top \\
=& R P(x) R^\top\,.
\end{align*}
Moreover, since both $b$ and $\nabla \xi$ satisfy the property \eqref{SE3_sym}, we also have $\epsilon_{Rx^{(k)}+w}^{(\sigma_k)}= \epsilon_{x^{(k)}}^{(\sigma_k)}$ (i.e.\ the probabilities of having no solution are the same). Therefore, for the transition density \eqref{trans-forward-projection}, we can derive, for any $R\in \mathrm{SO}(3)$ and $w\in\mathbb{R}^3$,
\begin{align*}
&q(Rx^{(k+1)}+w\,|\,Rx^{(k)}+w) \\
=& (2\pi\sigma_k^2)^{-\frac{d}{2}} \big(1-\epsilon_{Rx^{(k)}+w}^{(\sigma_k)}\big)^{-1}
|\det (U_{Rx^{(k)}+w}^\top U_{Rx^{(k+1)}+w})|
\\
& \quad\quad \cdot \exp(-\frac{\big|P(Rx^{(k)}+b)\big(Rx^{(k+1)}-Rx^{(k)}-\sigma_k^2 b(Rx^{(k)}+w)\big)\big|^2}{2\sigma_k^2}) \\
  =& (2\pi\sigma_k^2)^{-\frac{d}{2}} \big(1-\epsilon_{x^{(k)}}^{(\sigma_k)}\big)^{-1} \left|\det (U_{x^{(k)}}^\top R^\top R U_{x^{(k+1)}})\right| \\
  & \quad\quad \cdot
  \exp(-\frac{\big|RP(x^{(k)})R^\top\big(Rx^{(k+1)}-Rx^{(k)}-\sigma_k^2 Rb(x^{(k)})\big)\big|^2}{2\sigma_k^2})   \\
  =& (2\pi\sigma_k^2)^{-\frac{d}{2}} \big(1-\epsilon_{x^{(k)}}^{(\sigma_k)}\big)^{-1} \left|\det (U_{x^{(k)}}^\top U_{x^{(k+1)}})\right| \exp(-\frac{\big|P(x^{(k)})\big(x^{(k+1)}-x^{(k)}-\sigma_k^2 b(x^{(k)})\big)\big|^2}{2\sigma_k^2} ) \\
  =& q(x^{(k+1)}\,|\,x^{(k)})\,,
\end{align*}
which shows the $\mathrm{SE}(3)$-invariance of the transition density of the forward process. The invariance of the transition density of the reverse process in \eqref{trans-backward-projection} can be proved using the same argument, assuming that $s^{(k+1),\theta}$ satisfies the relation $s^{(k+1),\theta}(Rx+w)=Rs^{(k+1),\theta}(x)$. \hfill
\end{proof}

\section{Ablation study}

\begin{table}[htbp]
\centering
\begin{tabular}{lcccccc}
\toprule
$N$ & $\sigma$ & $R_{\mathrm{fail\_fwd}}$ & $T_{\mathrm{sim}}$ & $T_{\mathrm{train}}$ & JS & NLL \\
\midrule
50 & 2.45e-2 & 2.86\% & 19.2(3.2\%) & 589.3 & 4.382e-01\scriptsize{$\pm$3.81e-03} & 0.77\scriptsize{$\pm$0.01} \\
100 & 1.73e-2 & 1.23\% & 36.9(2.7\%) & 1313.1 & 4.314e-01\scriptsize{$\pm$2.18e-03}  & 0.77\scriptsize{$\pm$0.01} \\
200 & 1.22e-2 & 0.34\% & 54.0(2.0\%) & 2673.0 & 4.312e-01\scriptsize{$\pm$2.66e-03}  & 0.77\scriptsize{$\pm$0.01}\\
300 & 1.00e-2 & 0.11\% & 66.4(1.6\%) & 4186.2 & 4.298e-01\scriptsize{$\pm$2.89e-03}  & 0.77\scriptsize{$\pm$0.01}\\
400 & 0.87e-2 & 0.05\% & 84.8(1.2\%) & 7297.1 & 4.300e-01\scriptsize{$\pm$1.68e-03}  & 0.78\scriptsize{$\pm$0.01} \\
\bottomrule
\end{tabular}
\caption{Ablation study of the number of steps $N$ on the Spot the Cow dataset with $k=100$. Here, $\sigma$ denotes the step size and $R_{\mathrm{fail\_fwd}}$ represents the proportion of failed forward trajectories. $T_{\mathrm{sim}}$ and $T_{\mathrm{train}}$ are the simulation and training times (see Table~\ref{runtime}). JS denotes the Jensen-Shannon distance on mesh face histograms between the ground truth and generated distributions.}\label{spot100_N_aba}
\end{table}

\begin{table}[htbp]
\centering
\begin{tabular}{lcccccc}
\toprule
$N$ & $\sigma$ & $R_{\mathrm{fail\_fwd}}$ & $T_{\mathrm{sim}}$ & $T_{\mathrm{train}}$ & $\mathcal{W}_1$ \\
\midrule
50 & 0.26 & 0\% & 823.2(81.1\%) & 191.5 & 3.152e-01\scriptsize{$\pm$1.96e-03} \\
100 & 0.18 & 0\% & 1429.0(85.0\%) & 253.0 & 3.117e-01\scriptsize{$\pm$8.09e-04} \\
150 & 0.15 & 0\% & 2035.8(84.3\%) & 378.4 & 3.098e-01\scriptsize{$\pm$3.48e-03} \\
200 & 0.13 & 0\% & 2717.9(84.9\%) & 482.9 & 3.093e-01\scriptsize{$\pm$2.66e-03} \\
\bottomrule
\end{tabular}
\caption{Ablation study of the number of steps $N$ on conserved Hamiltonian surfaces. $\mathcal{W}_1$ denotes the 1-Wasserstein distance in coordinate space $(q_1,q_2,\dots,q_n)$ between the ground truth and generated distributions.}\label{Hamilton_N_aba}
\end{table}

Table \ref{spot100_N_aba} presents the computational cost and performance analysis for the Spot the Cow dataset with $k=100$ under varying $N$. We observe that, as $N$ increases, the Newton method's failure rate decreases, and both trajectory simulation time and training time increase accordingly. To better assess the generation quality, we compare the Jensen-Shannon distance between the ground truth and generated distributions, as the NLL metric fails to reveal meaningful differences. The results indicate that $N=50$ yields relatively poor performance, while other values of $N$ show comparable results with marginal differences. 

Table \ref{Hamilton_N_aba} presents the ablation study on conserved Hamiltonian surfaces. The results demonstrate that as $N$ increases, the computational cost grows while the accuracy improves accordingly.

The performance of our method is not highly sensitive to the number of steps $N$, as long as $N$ is not too small; the computational cost scales positively correlated with $N$. Thus, a viable strategy is to choose $N$ by balancing the Newton method's failure rate against computational overhead, while maintaining satisfactory generation quality.


\bibliographystyle{myabbrv}
\bibliography{ref}


\end{document}